\documentclass{style} % For LaTeX2e
\usepackage{xcolor, setspace, array}
\usepackage{enumitem}
\usepackage{float, wrapfig, graphicx, subfigure}
\usepackage{tabularx, booktabs, multirow, makecell, colortbl}
\usepackage{algorithm, algorithmic, enumitem, enumerate}
\usepackage{dsfont, natbib, url, xspace, smile}
\usepackage{subcaption}

\usepackage[most]{tcolorbox}
\usepackage{courier}
\tcbuselibrary{skins,breakable}
\usepackage{xcolor}
\usepackage[table]{xcolor}
\usepackage[scaled=0.92]{inconsolata}

\definecolor{promptbg}{RGB}{245,247,250}
\definecolor{promptborder}{RGB}{120,140,165}

\definecolor{algcommentblue}{RGB}{105,160,205} \newcommand{\algcomment}[1]{\hfill{\color{algcommentblue} $\triangleleft$\,\textit{#1}}}

\newtcolorbox{promptbox}[1]{%
  colback=blue!3, colframe=blue!40, boxrule=0.4pt, arc=1pt,
  left=4pt, right=4pt, top=3pt, bottom=3pt,
  title={\footnotesize\bfseries #1}, coltitle=black, colbacktitle=blue!15,
}

\newtcolorbox{rollout}[1]{%
  colback=blue!3,
  colframe=blue!40,
  boxrule=0.4pt,
  arc=1pt,
  left=4pt,
  right=4pt,
  top=3pt,
  bottom=3pt,
  title={\footnotesize\bfseries #1},
  coltitle=black,
  colbacktitle=blue!15,
  segmentation style={solid, blue!40, line width=0.4pt},
}

\title{An RL View of OPD: Least Square Policy Distillation for Sample-Efficient LLM Reasoning}

\author[1]{Shangzhe Li}
\author[1]{Yuxiao Yang}
\author[2]{Tianrun Yu}
\author[2]{Kaixiang Zhao}
\author[3]{Xiaoyun Wang}
\author[2]{Taylor W. Killian}
\author[1]{Weitong Zhang}

\affiliation[1]{University of North Carolina at Chapel Hill}
\affiliation[2]{Brigham Young University}
\affiliation[3]{NVIDIA}

\begin{document}

\abstract
{We study on-policy distillation (OPD) through the lens of reinforcement learning, establishing a connection between the reverse-KL objective in OPD and KL-regularized policy optimization. Building on this connection, we introduce Least-Square Policy Distillation (LSPD), an RL-inspired framework that brings optimistic exploration and off-policy data reuse from value-based RL into policy distillation. LSPD preserves policy diversity through exploration while improving rollout efficiency by repeatedly learning from previously collected trajectories. Our theoretical analysis connects LSPD to optimistic value-based learning and shows that its idealized formulation achieves a sharp $\tilde{\mathcal O}(\log K)$ regret bound under online exploration. Empirically, LSPD consistently outperforms existing distillation baselines across six mathematical reasoning benchmarks and diverse teacher--student settings, with average gains of $+1.59$ points in Avg@16. Remarkably, through Pass@$k$ evaluations up to $k=64$, we found that LSPD better preserves policy diversity by achieving stronger performance as $k$ grows. Its fully off-policy variant achieves comparable performance to vanilla OPD using only the first $25\%$ of rollout batches. Together, these results provide an RL perspective on OPD that offers both a principled interpretation and a practical route toward more effective and rollout-efficient language model distillation.

\code{https://github.com/UNCSciML/LSPD}
}

\maketitle

% OPD: $\min_{\btheta} \EE_{y \sim \pi_{\btheta}(\cdot | x)} [\log \pi_{\btheta}(y | x) - \log \pi_E(y | x)]$

% gradient: $\EE_{y \sim \pi_{\btheta}(\cdot | x)}  \nabla \log \pi_{\theta}(y | x)\cdot [\log \pi_{\btheta}(y | x) - \log \pi_E(y | x)] - \underbrace{\EE_{y \sim \pi_{\btheta}(\cdot | x)} [\nabla \log \pi_{\btheta}(y | x)]}_{0}$

% gradient: $\EE_{x, y \sim \cD}  \frac{\nabla \pi_{\theta}(y | x)}{\pi_{\cD}(y | x)}\cdot [\log \pi_{\btheta}(y | x) - \log \pi_E(y | x)] - \underbrace{\EE_{y \sim \pi_{\btheta}(\cdot | x)} [\nabla \log \pi_{\btheta}(y | x)]}_{0}$

% LSPD: $\min_{\btheta} \frac12 \EE_{(x, y) \sim \cD} [\log \pi_{\btheta}(y | x) - \log \pi_E (y | x)]^2$

% gradient: $\EE_{(x, y) \sim \cD} \nabla \log \pi_{\btheta}(y | x) \cdot [\log \pi_{\btheta}(y | x) - \log \pi_E (y | x)]$

% \clearpage 
\section{Introduction}

On-policy distillation (OPD)~\citep{lu2025onpolicydistillation} has emerged as a promising approach to transferring and improving the capabilities of large language models (LLMs). Building on knowledge distillation~\citep{hinton2015distilling,kim2016sequence}, OPD trains a student model using token-level teacher supervision on trajectories generated by the student itself. This allows the teacher to provide guidance at prefixes encountered under the student's own generation policy.

While OPD has recently been interpreted as a reinforcement learning (RL) approach~\citep{yang2026learning,lu2025onpolicydistillation} and implemented using policy optimization frameworks such as PPO~\citep{schulman2017proximal} and GRPO~\citep{shao2024deepseekmath}, the algorithmic implications of this formulation remain underexplored, particularly for exploration and data efficiency. These policy-based frameworks are typically implemented in an on-policy manner, requiring time-consuming fresh rollouts throughout training while offering limited support for historical data reuse and sufficient exploration.

On the other hand, KL-regularized RL~\citep{yang2026learning}, with maximum-entropy RL as a special case, has been extensively studied both empirically~\citep{haarnoja2018soft,haarnoja2017reinforcement,schulman2017equivalence} and theoretically~\citep{zhao2025logarithmic,zhao2026sharp}. This literature provides principled algorithms for data-efficient off-policy updates that reuse previously collected experience~\citep{watkins1992q,degris2012offpolicy,munos2016safe}, together with exploration strategies and regularization mechanisms that promote policy diversity and coverage. These advances provide a foundation for exploiting the structure of KL-regularized RL through value-based approaches to policy distillation.

Motivated by the success of value-based KL-regularized RL and the data-efficiency limitations of policy-based OPD implementations, we introduce \emph{Least Square Policy Distillation (LSPD)}, a distillation framework motivated by optimistic KL-regularized value-based RL. Starting from the teacher-induced log-ratio reward, we consider reward estimation using accumulated student trajectories and optimistic policy updates over statistically plausible reward functions. A Lagrangian relaxation of this formulation motivates quadratic matching between student and teacher log-probabilities. For practical token-level training, we combine a robust version of this matching loss with explicit entropy regularization to encourage policy diversity. Crucially, the resulting objective supports optimization on trajectories generated by earlier student policies, enabling both multiple updates per rollout batch and historical data reuse through a replay-buffer variant, \emph{LSPD-RB}. Together, these components provide a practical approach to incorporating exploration-promoting regularization and off-policy learning into policy distillation. To summarize, our main contributions are threefold:
\begin{itemize}[leftmargin=*]
\item \textbf{An RL-inspired policy distillation framework.}
We introduce Least Square Policy Distillation (LSPD), motivated by optimistic KL-regularized policy optimization. LSPD combines robust quadratic matching of student and teacher log-probabilities with explicit entropy regularization, encouraging policy diversity while naturally supporting off-policy optimization. We complement this framework with a theoretical analysis of an idealized optimistic formulation showing that the LSPD enjoys a sharp convergence rate with regret by $\tilde \cO(\log K)$ where $K$ is the update rounds. 

\item \textbf{Effective data reuse for sample-efficient distillation.}
We demonstrate that LSPD effectively leverages both multiple updates per rollout batch and historical trajectories through its replay-buffer variant, LSPD-RB. In our off-policy experiments, LSPD-RB reaches saturated performance in approximately $10$ rollout batches, compared with more than $40$ for LSPD with one update per batch, highlighting the potential of historical data reuse to reduce rollout requirements.

\item \textbf{Improved reasoning performance, diversity, and efficiency.}
Across six benchmarks and three teacher--student settings, LSPD improves Avg@16 and Pass@16 by +1.59 and +1.87 points on average over baselines. LSPD with replay buffer further improves Pass@16 with only 10 training steps, while higher entropy and Pass@$k$ demonstrate better policy diversity and solution coverage.
\end{itemize}

\section{Related Works}

Our work builds on language model distillation and reinforcement learning for LLM post-training. We connect reverse-KL distillation to optimistic KL-regularized RL, introduce a maximum-entropy least-square objective for off-policy data reuse, and establish a sharp theoretical guarantee.

% : language model distillation and reinforcement learning methods for improving language model capabilities.
\vspace{.3em}\noindent \textbf{Language Model Distillation.}
Language model distillation transfers knowledge from a larger teacher model to a smaller student model. Standard approaches typically minimize the forward KL divergence from the teacher to the student using teacher-generated data~\citep{hinton2015distilling,kim2016sequence}. Subsequent work incorporates student-generated on-policy trajectories alongside teacher-generated data~\citep{agarwal2024policy}, or formulates on-policy distillation as an RL problem based on the reverse KL divergence between the student and teacher policies~\citep{gu2024minillm,lu2025onpolicydistillation}. Other studies investigate the effects of different divergence choices~\citep{wu2025rethinking}, interpret language model distillation through the lens of temporal-difference imitation learning~\citep{yu2026language}, and improve its data efficiency~\citep{hsieh2023distilling}. DistiLLM combines skew KL objectives with adaptive off-policy reuse of student-generated responses~\citep{ko2024distillm}, while DistiLLM-2 introduces a contrastive formulation that increases the likelihood of teacher responses and decreases that of student responses~\citep{ko2025distillm2}. Recent extensions of on-policy distillation further address reward extrapolation~\citep{yang2026learning}, introduce entropy-aware training~\citep{jin2026entropy}, and incorporate curriculum-level guidance~\citep{li2026policy}.

% \looseness=-1
\vspace{.3em}\noindent \textbf{Reinforcement Learning for LLM Post-Training.}
Reinforcement learning has been widely used to improve the instruction-following and reasoning capabilities of large language models. \citet{ouyang2022training} introduced reinforcement learning from human feedback for aligning language models with human preferences, while more recent work employs verifiable, rule-based rewards to improve mathematical and general reasoning capabilities~\citep{shao2024deepseekmath,guo2025deepseek,yu2026dapo}. Complementary to online RL, Identity Preference Optimization (IPO) learns directly from pairwise preferences using a squared loss on differences of policy-to-reference log-likelihood ratios~\citep{azar2024general}, whereas our quadratic objective matches student and teacher log-probabilities. A growing theoretical literature has established sharp performance guarantees~\citep{zhao2026sharp} and logarithmic regret bounds~\citep{zhao2025logarithmic} for KL-regularized RL.
\section{Preliminaries}
\label{sec:prelim}
We formulate language model post-training from an RL view: at the sequence level, the post-training of LLM can be viewed as a contextual bandit. In particular, at position $t$, we denote the prefix $\xb_{<t} = (\qb, \yb_{<t}) \in \cX$ using the query $\qb$ from the dataset $\cD$ and the tokens generated by the model $\yb_{<t}$. The model then generates the next token $y_t \in \cY$ from the policy $\pi(\cdot | \xb)$. Through this process, we define the reward $R: \mathcal{X}\times\mathcal{Y}\rightarrow\mathbb{R}$ and it's class $\cR \ni R$ which the policy seeks to maximize.

\vspace{.3em}\noindent \textbf{Maximum-Entropy and KL-Regularized Reinforcement Learning.} Instead of greedy maximizing the reward by $\argmax_{\pi} \EE_{\pi}[R(\qb, \yb)]$, maximum-entropy reinforcement learning~\citep{maxentirl2008,haarnoja2018soft}, or generally, the KL regularized reinforcement learning~\cite{zhao2025logarithmic}, regularize the policy optimization with the KL divergence with the objective
\begin{align}
\textstyle{\argmax_{\pi}} \mathbb{E}_{\yb\sim\pi(\cdot|\qb)}[R(\qb,\yb)] - \eta^{-1} D_{\mathrm{KL}}\!\left(\pi(\cdot|\qb) \,\|\, \pi^{\mathrm{ref}}(\cdot|\qb) \right).
\label{eq:max-ent-rl}
\end{align}
where the $\pi^{\mathrm{ref}}$ denotes a fixed reference policy. When $\pi^{\mathrm{ref}}$ is the uniform policy on the action set $\mathcal{A}$, it can be verified that $D_{\mathrm{KL}}\!\left(\pi(\cdot|\qb) \,\|\, \pi^{\mathrm{ref}}(\cdot|\qb) \right)$ becomes the negative entropy ${-}\mathcal{H}\!\left(\pi(\cdot|\qb)\right)$ and Eq.~\ref{eq:max-ent-rl} becomes the typical setting of maximum entropy RL $\argmax_{\pi} \mathbb{E}_{\yb\sim\pi(\cdot|\qb)}[R(\qb,\yb)] + \eta^{-1} \mathcal{H}\!\left(\pi(\cdot|\qb)\right)$.

\vspace{.3em}\noindent \textbf{On-Policy Distillation.}
On-policy distillation (OPD,~\citealt{lu2025onpolicydistillation}) minimizes the token-level reverse KL divergence between the teacher policy $\pi^E$ with the student policy $\pi$ by
\begin{align}
\mathcal{L}_{\mathrm{OPD}}(\pi) =
\mathbb{E} \left[\sum_{t=1}^{T}
D_{\mathrm{KL}}\!\left(\pi(\cdot|\xb_{<t})\| \pi^E(\cdot|\xb_{<t})\right)\right] = \mathbb{E} \left[\sum_{t=1}^{T} 
\EE_{y_t \sim \pi(\cdot|\xb_{<t})} \log \frac{\pi(y_t|\xb_{<t})}{\pi^E(y_t|\xb_{<t})}\right],
\label{eq:opd-prelim}
\end{align}
where the expectation is taken over prefix $\xb_{<t}$ is sampled from student policy $\pi$ from some query $\qb$, which is dubbed as \emph{on-policy} since it requires student's rollout. Notably, the current common implementation of OPD leverages the on-policy optimization framework like PPO~\citep{schulman2017proximal} by taking the policy gradient (and additional clippings in PPO) by  
\begin{align}
\nabla \mathcal{L}_{\mathrm{OPD}}(\pi)\! =\!
 \mathbb{E} \left[\sum_{t=1}^{T} 
\nabla \log \pi(y_t | \xb_{<t}) \cdot \log \frac{\pi^\perp(y_t|\xb_{<t})}{\pi^E(y_t|\xb_{<t})}\right]\!=\! \mathbb{E} \left[\sum_{t=1}^{T} 
\frac{\nabla \pi(y_t | \xb_{<t})}{\pi^\perp(y_t | \xb_{<t})} \cdot \log \frac{\pi^\perp(y_t|\xb_{<t})}{\pi^E(y_t|\xb_{<t})}\right],
\notag 
\end{align}
where $\pi^\perp$ denotes the stopping gradient and $\log \frac{\pi^\perp(y_t|\xb_{<t})}{\pi^E(y_t|\xb_{<t})}$ is the advantage function for PPO process.

\section{Methodology and Analysis}

\subsection{On-Policy Distillation as KL-Regularized Reinforcement Learning}

We reinterpret the reverse-KL objective in Eq.~\ref{eq:opd-prelim} as token-level policy optimization with a teacher-induced reward. Following the notation in Section~\ref{sec:prelim}, we fix an arbitrary reference policy $\pi^{\mathrm{ref}}$ and define $R(\xb_{<t},y_t)=\log\!\left(\pi^E(y_t|\xb_{<t})/\pi^{\mathrm{ref}}(y_t|\xb_{<t})\right)$, assuming that the teacher and reference policies assign positive probability to tokens in the student's support at each prefix. Then, the OPD objective in Eq.~\ref{eq:opd-prelim} becomes
\begin{align}
\argmin_{\pi}\mathcal{L}_{\mathrm{OPD}}(\pi)
&= \argmin_{\pi} \mathbb{E}\left[
\sum_{t=1}^{T}\log\frac{\pi(y_t|\xb_{<t})}{\pi^E(y_t|\xb_{<t})}
\right] \nonumber\\
&= \argmin_{\pi} \mathbb{E}\left[
\sum_{t=1}^{T}\left(
\log\frac{\pi(y_t|\xb_{<t})}{\pi^{\mathrm{ref}}(y_t|\xb_{<t})}
-\log\frac{\pi^E(y_t|\xb_{<t})}{\pi^{\mathrm{ref}}(y_t|\xb_{<t})}
\right)\right] \nonumber\\
&= \argmax_{\pi} \mathbb{E}\left[
\sum_{t=1}^{T}\left(
\EE_{y_t\sim\pi(\cdot|\xb_{<t})}R(\xb_{<t},y_t)
-D_{\mathrm{KL}}\!\left(\pi(\cdot|\xb_{<t})\|\pi^{\mathrm{ref}}(\cdot|\xb_{<t})\right)
\right)\right].
\label{eq:opd-kl-rl}
\end{align}
Here, expectations are taken over $\qb\sim\cD$ and autoregressive generation from $\pi$, with $\xb_{<t}=(\qb,\yb_{<t})$. Thus, OPD is the token-level counterpart of the KL-regularized policy optimization problem in Eq.~\ref{eq:max-ent-rl} with $\eta=1$. The reward $R$ is fixed with respect to the student and measures the teacher's next-token log-likelihood relative to the reference, while the KL penalty discourages deviations from that reference at each prefix. Importantly, this is an exact reformulation of OPD rather than an additional regularization term. Choosing $\pi^{\mathrm{ref}}$ as a fixed initial model yields the reward--KL structure commonly used in RLHF~\citep{ouyang2022training,rafailov2023direct}, with the reward specified directly by the teacher-to-reference likelihood ratio. Prior work~\citep{yang2026learning} has also noted the connection between OPD and KL-regularized RL.

% \clearpage
% Given a teacher policy $\pi^E$ and a student policy $\pi$, on-policy distillation (OPD) trains the student on trajectories generated by its current policy and queries the teacher at the resulting student-visited prefixes~\citep{lu2025onpolicydistillation}. Let $d_t^\pi$ denote the distribution of the token-level state $s_t=(x,y_{<t})$ induced by $x\sim\mathcal{D}$ and autoregressive generation from $\pi$. The reverse-KL OPD objective is

% By the chain rule of KL divergence, this token-level objective is equivalent to $\mathbb{E}_{x\sim\mathcal{D}}\!\left[D_{\mathrm{KL}}\!\left(\pi(y_{1:T}|x)\,\|\,\pi^E(y_{1:T}|x)\right)\right]$. Thus, ``on-policy'' refers to evaluating teacher feedback under the prefix distribution induced by the student; the teacher provides token distributions at these prefixes but does not generate the training trajectories.

\subsection{Least Square Policy Distillation}

\paragraph{Optimistic KL-regularized Update.} Given the KL-regularized objective introduced in the previous sub-section, the next question is how to solve the resulting policy optimization problem. A straightforward approach is to apply PPO \citep{schulman2017proximal}, using advantage estimation together with the clipped surrogate objective. However, PPO is not an optimistic optimization method: its updates are driven by the current reward estimates and do not explicitly account for uncertainty in a way that encourages exploration toward potentially better tokens. Motivated by prior works on KL-regularized reinforcement learning \citep{zhao2026sharp,zhao2025logarithmic}, which leverage optimistic reward estimation for policy optimization, we instead consider a more direct approach that admits an explicit, ``closed-form'' iterative update of the conditional token distribution. Specifically, after collecting rollouts through iteration $k$, we estimate the induced token-level reward using their prefix--token pairs:
\begin{align}
\hat r_k
&\in
\textstyle{\argmin_{r \in \mathcal{R}}}
\textstyle{\sum_{i=1}^{k}}
\EE_{\qb\sim\cD,\,\yb\sim\pi_i(\cdot|\qb)}
\left[
\textstyle{\sum_{t=1}^{T}}
\left(
 r(\xb_{<t},y_t)-R(\xb_{<t},y_t)
\right)^2
\right]
\nonumber\\
&=
\textstyle{\argmin_{r \in \mathcal{R}}}
\textstyle{\sum_{i=1}^{k}}
\EE_{\qb\sim\cD,\,\yb\sim\pi_i(\cdot|\qb)}
\left[
\textstyle{\sum_{t=1}^{T}}
\left(
 r(\xb_{<t},y_t)
-
\log
\frac{\pi^E(y_t|\xb_{<t})}
{\pi^{\mathrm{ref}}(y_t|\xb_{<t})}
\right)^2
\right].
\label{eqn:regression}
\end{align}

For a stored rollout $(\qb_i,\yb_i)$, let $y_{i,t}$ denote its token at position $t$ and $\xb_{i,<t}=(\qb_i,\yb_{i,<t})$ its prefix. We then construct a least-squares confidence set containing reward functions that remain statistically consistent with the accumulated prefix--token pairs:
\begin{align}
\mathcal{C}_k
=
\left\{
 r\in\mathcal{R} :
\textstyle{\sum_{i=1}^{k}\sum_{t=1}^{T}}
\left[r(\xb_{i,<t},y_{i,t})-R(\xb_{i,<t},y_{i,t})\right]^2
+\lambda
\leq
\beta^2
\right\}.
\label{eqn:uncertainty}
\end{align}

Here, $\beta$ denotes the confidence radius and $\lambda>0$ is a regularization parameter. We directly find the pointwise optimistic token-level reward
$r_k^+(\xb_{<t},y_t)
=
\sup_{r\in\mathcal{C}_k}r(\xb_{<t},y_t)$.
At each fixed prefix, we then use the closed-form KL-regularized token update
\[
\pi_{k+1}(y_t|\xb_{<t})\propto
\pi^{\mathrm{ref}}(y_t|\xb_{<t})
\exp\!\left(\eta r_k^+(\xb_{<t},y_t)\right).
\]
This update optimizes the conditional token distribution while holding the prefix fixed. Thus, each iteration uses stored prefix--token pairs to estimate the induced reward and constructs optimistic next-token distributions from the most favorable pointwise reward estimates compatible with the collected data. Algorithm~\ref{alg:rpd-theory} summarizes the corresponding sequence-level theoretical idealization.

\paragraph{Lagrangian Relaxation.}
The pointwise optimization $r_k^+(\xb_{<t},y_t)=\sup_{r\in\mathcal{C}_k}r(\xb_{<t},y_t)$ can be reformulated through a Lagrangian relaxation. Ignoring terms that are constant with respect to $r$, for each fixed prefix $\xb_{<t}$ and token $y_t\in\cY$, we obtain
\begin{align}
\arg\max_{r\in\mathcal{R}}
 r(\xb_{<t},y_t)
-
\mu
\textstyle{\sum_{i=1}^{k}\sum_{u=1}^{T}}
\left[
 r(\xb_{i,<u},y_{i,u})-R(\xb_{i,<u},y_{i,u})
\right]^2,
\end{align}
where $\mu\geq 0$ denotes the Lagrange multiplier and $u$ indexes token positions in the stored rollouts. We further consider a centered reward function class 
$
\mathcal{R}
=
\left\{
 r_\pi(\xb_{<t},y_t)
=
\log\!\left({\pi(y_t|\xb_{<t})}
/{\pi^{\mathrm{ref}}(y_t|\xb_{<t})}\right)
:
\pi\in\Pi
\right\}
$, which is parameterized by token-level policies. Since $\pi^{\mathrm{ref}}$ is fixed, the pointwise optimistic optimization at each prefix can therefore be equivalently written as
\begin{align}
\forall y_t\in\cY:
\arg\max_{\pi\in\Pi}
\log\pi(y_t|\xb_{<t})
-
\mu
\sum_{i=1}^{k}\sum_{u=1}^{T}
\left(
\log\pi(y_{i,u}|\xb_{i,<u})
-
\log\pi^E(y_{i,u}|\xb_{i,<u})
\right)^2
.
\label{eqn:rpd}
\end{align}
In this way, we obtain an optimistic token-level distillation formulation motivated by the reverse-KL objective of OPD \citep{lu2025onpolicydistillation}. Importantly, the resulting objective is off-policy, allowing the algorithm to leverage prefix--token pairs from historical rollouts for policy updates rather than restricting training to samples generated by the current policy. Moreover, the mechanism through which optimism is incorporated into the objective is closely related to maximum-entropy objectives studied in the reinforcement learning literature \citep{maxentirl2008,haarnoja2018soft} and, more recently, in the LLM post-training literature \citep{xie2025exploratory}.

\subsection{Practical Implementation}

In practice we are given a query dataset $\mathcal{D}$, a teacher policy $\pi^E$, and a student policy $\pi$. The student generates rollouts $\yb$, conditioned on queries $\qb$ sampled from $\mathcal{D}$, yielding prefix--token pairs $(\xb_{<t},y_t)$. Building on Eq.~\ref{eqn:rpd}, we use a practical surrogate that combines robust token-level log-probability matching with an entropy bonus in place of the pointwise optimistic term:
\begin{align}
\mathcal{J}(\pi)
=
\mathbb{E}_{(\qb,\yb)\sim\mathcal{B}}
\left[
\tfrac{1}{T}
\textstyle{\sum_{t=1}^{T}}
\left(
\left(\log \pi(y_t|\xb_{<t})
-
\log \pi^E(y_t|\xb_{<t})\right)^2
-
\mu^{-1}\,
\mathcal{H}\!\left(\pi(\cdot|\xb_{<t})\right)
\right)
\right],
\label{eqn:rpd-practical}
\end{align}
where $\mathcal{B}$ denotes a replay buffer that stores query-response pairs, $\mathcal{H}\!\left(\pi(\cdot|\xb_{<t})\right)$ denotes the entropy of the current student's next-token distribution at a stored prefix.
% , and
% \begin{align}
% \Delta_t(\pi,\pi^E)
% =
% \log \pi(y_t|\xb_{<t})
% -
% \log \pi^E(y_t|\xb_{<t})
% \end{align}
% denotes the token-level log-probability difference between the student and teacher policies. 

To improve training stability and reduce the influence of excessively large log-probability differences, in practical implementations, we replace the standard quadratic loss with a Huber-type robust penalty $\psi(\log \pi(y_t|\xb_{<t})
- \log \pi^E(y_t|\xb_{<t}))$, detailed in Appendix~\ref{sec:huber}. Notably, the objective in Eq.~\ref{eqn:rpd-practical} supports both on-policy and off-policy optimization. The on-policy (or semi-on-policy) variant uses freshly generated responses from the current policy $\pi$ for one or multiple optimization updates, whereas the off-policy variant additionally reuses responses generated by earlier policies through a replay buffer. This flexibility allows rollout data to be efficiently reused across multiple updates. In the following experiments, we refer to the former as Least Square Policy Distillation (LSPD) and the fully off-policy variant as Least Square Policy Distillation with Replay Buffer (LSPD-RB). The detailed practical implementation is summarized in Algorithm~\ref{alg:rpd}.

% In the preamble
\begin{algorithm}[t]
\caption{Least Square Policy Distillation}
\label{alg:rpd}
\begin{algorithmic}[1]
\REQUIRE Student policy $\pi$, teacher policy $\pi^E$, replay buffer $\mathcal{B}$, batch size $B$, number of updates $N$.
\STATE Initialize $\mathcal{B}\leftarrow\emptyset$.
\FOR{$k=1,\ldots,K$}
    \STATE Sample queries $\{\qb_b\}_{b=1}^B$ and responses
    $\{\yb_{b}\}\sim\pi_k(\cdot|\qb_b)$.
    \algcomment{On-policy rollout collection}

    \STATE $\mathcal{B}\leftarrow
    \mathcal{B}\cup\{(\qb_b,\yb_{b})\}$.

    \FOR{$i=1,\ldots,N$}
        \STATE Sample a minibatch from $\mathcal{B}$ and update the student
        using Eq.~\ref{eqn:rpd-practical}.
    \ENDFOR

    \IF{use LSPD}
        \STATE $\mathcal{B}\leftarrow\emptyset$.
        \algcomment{No replay in LSPD; LSPD-RB reuses historical trajectories.}
    \ENDIF
\ENDFOR
\end{algorithmic}
\end{algorithm}

\section{Experiments}

We evaluate the empirical performance of our proposed method against a range of post-training baselines on mathematical reasoning tasks. We describe the experimental setup in Section~\ref{sec:exp-setup}, present the main results in Section~\ref{sec:emp-result}, investigate off-policy learning and ablation studies in Section~\ref{sec:off-policy}. Results on out-of-domain evaluation and more ablation studies are shown in Appendix~\ref{sec:addition-exp}.

\subsection{Experimental Setup}
\label{sec:exp-setup}

\paragraph{Model Setup.} We consider three distillation setups based on the Qwen3 model family~\citep{yang2025qwen3}. Specifically, we use Qwen3-8B as the teacher and Qwen3-4B-Base as the student, Qwen3-4B as the teacher and Qwen3-1.7B-Base as the student, and Qwen3-1.7B as the teacher and Qwen3-0.6B-Base as the student. For all teacher models, we disable thinking mode and use standard non-thinking model configuration.

\vspace{.3em} \noindent \textbf{Dataset and Evaluation Setup.} For all experiments, we use DAPO-Math-17K \citep{yu2026dapo} as the training dataset for mathematical reasoning. For evaluation, we consider a diverse set of challenging mathematical reasoning benchmarks spanning different difficulty levels, including MATH-500 \citep{hendrycks2021measuring}, Minerva \citep{lewkowycz2022solving}, Olympiad-Bench \citep{he2024olympiadbench}, AMC23 \citep{maa2023amc}, AIME24 \citep{aime24}, and AIME25 \citep{aime25}. We report the average score over 16 generations (Avg@16)  as the final result for all baselines and benchmarks in Table~\ref{tab:main_results}.

\vspace{.3em} \noindent \textbf{Generation Configurations.} During training, we use top-$p$ sampling with $p=1.0$, a temperature of $1.0$, a maximum rollout length of 7168 tokens, and four rollouts per query. During evaluation, we use top-$p$ sampling with $p=0.95$, a temperature of $0.7$, and the same maximum rollout length of 7168 tokens. For both training and evaluation, we use the prompt template described in Appendix~\ref{sec:prompt-template}.

\vspace{.3em} \noindent \textbf{Baselines.}
We compare LSPD and LSPD-RB against several language model distillation baselines: Knowledge Distillation (KD) \citep{hinton2015distilling,kim2016sequence}, On-Policy Distillation (OPD) \citep{lu2025onpolicydistillation}, and Entropy-Aware On-Policy Distillation (EOPD) \citep{jin2026entropy}. KD performs off-policy distillation on teacher-generated trajectories using cross-entropy supervision together with forward KL divergence between the student and teacher distributions. In contrast, OPD trains on student-generated trajectories using the reverse-KL objective introduced in Section~\ref{sec:prelim}. EOPD further augments OPD with forward KL regularization at high-entropy teacher token positions to better preserve teacher uncertainty and output diversity.

\subsection{Main Results}
\label{sec:emp-result}
\paragraph{Results on Mathematical Reasoning Capabilities.}
Table~\ref{tab:main_results} shows that LSPD consistently delivers strong mathematical reasoning performance across model scales and benchmarks. Averaged over all 18 model--benchmark combinations, LSPD achieves an Avg@16 of 31.60 and Pass@16 of 53.30, outperforming the strongest baseline, EOPD, by +0.91 and +0.85 points, respectively, and standard OPD by +1.99 and +2.27 points. Notably, LSPD-RB achieves a comparable Avg@16 of 31.51 while further improving Pass@16 to 54.86, exceeding EOPD by +0.82 Avg@16 and +2.42 Pass@16, and OPD by +1.89 and +3.84 points, respectively. When all five methods are considered, LSPD attains the best or tied-best Avg@16 on 11 of 18 settings and Pass@16 on 9 of 18 settings, while remaining top-2 on 17 of 18 and 16 of 18 settings, respectively. More importantly, considering LSPD and LSPD-RB together, at least one of the two achieves the best or tied-best result on 17 of 18 settings for both Avg@16 and Pass@16, and ranks second or tied-second in each of the two remaining cases. Thus, the better-performing LSPD variant is top-2 across all 36 metric--setting combinations, demonstrating robust improvements in both average-generation accuracy and multi-sample solution coverage. Remarkably, LSPD-RB achieves these results using only 10 training steps, whereas the baselines require more than 40 steps to converge stably, highlighting the substantial sampling efficiency enabled by off-policy reuse.

\begin{table*}[t]
    \centering
    \caption{
    \textbf{Main Results on Mathematical Reasoning.}
    We report Avg@16 and Pass@16 performance of our proposed LSPD
    and three baselines (KD, OPD, and EOPD) across six mathematical
    reasoning benchmarks and three teacher--student distillation settings
    based on the Qwen3 model family. The best-performing result within each
    setup is highlighted in \textbf{bold}, and the second-best result is
    \underline{underlined}. Note that LSPD-RB results are reported with only \textit{10 training steps}, whereas all other methods require at least 40 training steps to converge stably.
    }
    \vspace{-1em}
    \label{tab:main_results}

    \renewcommand{\arraystretch}{1.10}

    \resizebox{0.99\textwidth}{!}{
    \begin{tabular}{c|c|cc|cc|cc|cc|cc|cc}
        \toprule
        \textbf{Model} & \textbf{Method}
        & \multicolumn{2}{c|}{\textbf{MATH-500}}
        & \multicolumn{2}{c|}{\textbf{Minerva}}
        & \multicolumn{2}{c|}{\textbf{Olympiad}}
        & \multicolumn{2}{c|}{\textbf{AMC23}}
        & \multicolumn{2}{c|}{\textbf{AIME24}}
        & \multicolumn{2}{c}{\textbf{AIME25}} \\

        &
        & Avg@16 & Pass@16
        & Avg@16 & Pass@16
        & Avg@16 & Pass@16
        & Avg@16 & Pass@16
        & Avg@16 & Pass@16
        & Avg@16 & Pass@16 \\
        \midrule

        % ============================================================
        % 8B -> 4B
        % ============================================================

        & KD
        & 79.73 & 94.00
        & \underline{38.51} & 58.09
        & 45.97 & 67.85
        & 51.20 & 77.11
        & 17.71 & 33.33
        & 17.08 & \underline{36.67} \\

        & OPD
        & 79.31 & \underline{95.20}
        & 35.73 & \underline{58.46}
        & 44.64 & 67.85
        & 48.49 & 79.97
        & 17.92 & 30.00
        & 15.83 & \textbf{40.00} \\

        & EOPD
        & 80.50 & 95.00
        & 37.78 & \textbf{59.56}
        & 46.36 & 68.00
        & 51.05 & 80.72
        & 17.92 & \underline{36.67}
        & 17.92 & \textbf{40.00} \\

        \rowcolor{blue!6}
        \cellcolor{white}
        & \textbf{LSPD}
        & \textbf{81.61} & \textbf{95.40}
        & \textbf{38.99} & \textbf{59.56}
        & \underline{47.22} & \underline{69.04}
        & \textbf{52.48} & \underline{81.93}
        & \underline{18.96} & \underline{36.67}
        & \underline{18.13} & \underline{36.67} \\

        \rowcolor{blue!6}
        \cellcolor{white}
        \multirow{-5}{*}{\makecell{8B\\$\downarrow$\\4B}}
        & \textbf{LSPD-RB}
        & \underline{81.55} & \underline{95.20}
        & 38.49 & 58.09
        & \textbf{47.41} & \textbf{69.33}
        & \underline{52.11} & \textbf{84.34}
        & \textbf{21.25} & \textbf{50.00}
        & \textbf{18.75} & \underline{36.67} \\

        \midrule

        % ============================================================
        % 4B -> 1.7B
        % ============================================================

        & KD
        & 66.75 & \underline{91.00}
        & 27.76 & 52.94
        & 30.52 & 58.52
        & 35.54 & \underline{66.27}
        & 8.13 & 26.67
        & 5.21 & 13.33 \\

        & OPD
        & 69.13 & 90.20
        & 26.93 & 53.68
        & 31.98 & 57.04
        & 34.64 & 65.06
        & 10.00 & 26.67
        & 6.88 & 16.67 \\

        & EOPD
        & 69.75 & 90.60
        & 27.92 & 54.04
        & 32.91 & 59.26
        & 35.84 & \underline{66.27}
        & 11.46 & \underline{33.33}
        & \underline{7.50} & \underline{20.00} \\

        \rowcolor{blue!6}
        \cellcolor{white}
        & \textbf{LSPD}
        & \textbf{70.86} & \textbf{91.80}
        & \underline{29.32} & \textbf{55.15}
        & \underline{33.47} & \underline{59.70}
        & \textbf{36.97} & \textbf{67.47}
        & \textbf{11.88} & 30.00
        & \textbf{8.33} & \textbf{23.33} \\

        \rowcolor{blue!6}
        \cellcolor{white}
        \multirow{-5}{*}{\makecell{4B\\$\downarrow$\\1.7B}}
        & \textbf{LSPD-RB}
        & \underline{70.69} & \underline{91.00}
        & \textbf{29.57} & \underline{54.41}
        & \textbf{33.50} & \textbf{61.04}
        & \underline{36.60} & \textbf{67.47}
        & \underline{11.67} & \textbf{36.67}
        & \textbf{8.33} & \textbf{23.33} \\

        \midrule

        % ============================================================
        % 1.7B -> 0.6B
        % ============================================================

        & KD
        & 49.61 & 79.60
        & 15.49 & \underline{41.18}
        & 18.98 & 41.63
        & 22.97 & 53.01
        & 2.08 & 13.33
        & \underline{1.88} & \underline{10.00} \\

        & OPD
        & 51.40 & 79.80
        & 11.28 & 37.50
        & \textbf{20.66} & 41.93
        & 22.89 & 51.81
        & \underline{4.17} & 13.33
        & 1.25 & \textbf{13.33} \\

        & EOPD
        & \underline{51.98} & 79.40
        & 15.85 & 38.60
        & 19.84 & 42.81
        & \underline{23.12} & 53.01
        & 3.33 & \underline{16.67}
        & 1.46 & \underline{10.00} \\

        \rowcolor{blue!6}
        \cellcolor{white}
        & \textbf{LSPD}
        & \textbf{52.04} & \textbf{81.60}
        & \textbf{17.14} & 40.81
        & \underline{20.39} & \underline{43.56}
        & \textbf{24.62} & \underline{54.22}
        & \textbf{5.00} & \textbf{20.00}
        & 1.46 & \underline{10.00} \\

        \rowcolor{blue!6}
        \cellcolor{white}
        \multirow{-5}{*}{\makecell{1.7B\\$\downarrow$\\0.6B}}
        & \textbf{LSPD-RB}
        & 51.56 & \underline{81.20}
        & \underline{16.36} & \textbf{41.54}
        & 20.12 & \textbf{44.59}
        & 22.97 & \textbf{65.06}
        & 3.96 & \underline{16.67}
        & \textbf{2.29} & \textbf{13.33} \\

        \bottomrule
    \end{tabular}
    }
\end{table*}

\begin{figure}[t]
    \centering
    \includegraphics[width=1.0\linewidth]{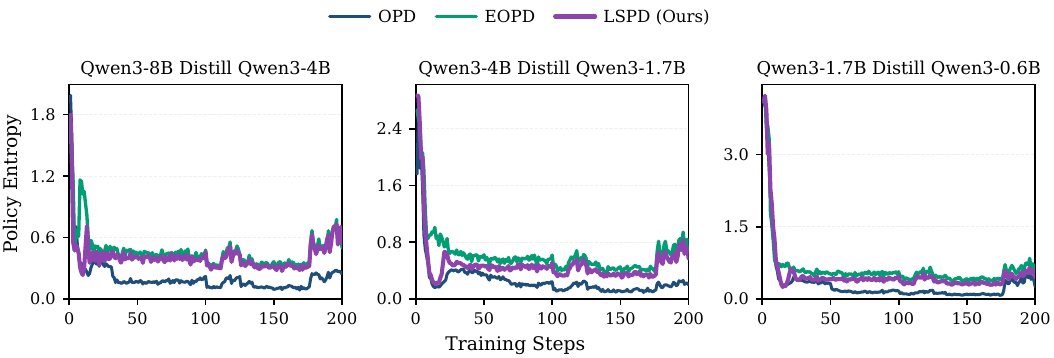}
    \vspace{-1em}
    \caption{\textbf{Student Entropy during Training.} We plot the student policy entropy over 200 training steps for OPD, EOPD, and LSPD across three teacher--student pairs. LSPD maintains an entropy level comparable to EOPD and consistently higher than standard OPD, indicating improved preservation of student policy diversity during distillation.}
    \label{fig:entropy}
\end{figure}

\vspace{.3em} \noindent \textbf{Results on Model Entropy.}
To examine the student's response diversity during distillation, we track its policy entropy throughout training across three teacher--student pairs in Figure~\ref{fig:entropy}. Both LSPD and EOPD \citep{jin2026entropy} maintain student entropy at approximately 0.5, consistently higher than standard OPD \citep{lu2025onpolicydistillation}. While EOPD augments the OPD objective with an entropy-gated forward KL term, LSPD encourages higher entropy directly through the entropy regularization term in Eq.~\ref{eqn:rpd-practical}, together with the robust log-probability matching objective. Despite these different formulations, both methods exhibit similar entropy behavior during training. This suggests that LSPD can maintain a higher level of student policy diversity than standard OPD with direct entropy-regularized maximization instead of relying on EOPD's entropy-gated forward KL formulation.

\vspace{.3em} \noindent \textbf{Results on Pass@$k$ Scaling.}
To further evaluate the multi-sample solution coverage and useful generation diversity of the distilled student model, we study how Pass@$k$ scales with the number of sampled responses. As reported in Table~\ref{tab:passk_scaling}, we distill a Qwen3-4B-Base student from a Qwen3-8B non-thinking teacher using LSPD and three baselines, KD, OPD, and EOPD, and evaluate Pass@$k$ over a broad range of sampling budgets, with $k\in\{1,2,4,8,16,32,64\}$. Remarkably, LSPD exhibits stronger scaling as the sampling budget increases. In particular, it achieves the best or tied-best Pass@32 and Pass@64 on all three benchmarks, reaching 85.54, 43.33, and 46.67 Pass@32, and 89.16, 46.67, and 50.00 Pass@64 on AMC23, AIME24, and AIME25, respectively. Averaged across the three benchmarks, LSPD achieves Pass@32/64 of 58.51/61.94, consistently outperforming the strongest competing method, EOPD (57.00/60.43), by +1.51 and +1.52 points, respectively. These results suggest that LSPD maintains competitive single-sample accuracy while providing stronger solution coverage under larger sampling budgets, indicating that its improvements are not limited to concentrating probability mass on a small set of high-probability responses.

\begin{table*}[t]
    \centering
    \caption{\textbf{Pass@$k$ Scaling Results.}
    We report Pass@$k$ results for
    $k\in\{1,2,4,8,16,32,64\}$ on Qwen3-4B-Base distilled from Qwen3-8B non thinking teacher. The best-performing result within each setup is highlighted in \textbf{bold}, and the second-best result is \underline{underlined}.}
    \vspace{-1em}
    \label{tab:passk_scaling}

    \renewcommand{\arraystretch}{1.10}

    \resizebox{0.99\textwidth}{!}{
    \begin{tabular}{c c|ccccccc}
        \toprule
        \textbf{Benchmark} & \textbf{Method}
        & \textbf{Pass@1}
        & \textbf{Pass@2}
        & \textbf{Pass@4}
        & \textbf{Pass@8}
        & \textbf{Pass@16}
        & \textbf{Pass@32}
        & \textbf{Pass@64} \\
        \midrule

        \multirow{4}{*}{\textbf{AMC23}}
        & KD
        & \underline{51.81}
        & 56.63
        & 66.27
        & 71.08
        & 77.11
        & \textbf{85.54}
        & \underline{87.95} \\

        & OPD
        & \underline{51.81}
        & 55.42
        & 63.86
        & 73.49
        & 79.97
        & 83.13
        & 86.75 \\

        & EOPD
        & \underline{51.81}
        & \underline{60.24}
        & \textbf{72.29}
        & \underline{75.90}
        & \underline{80.72}
        & \underline{84.34}
        & \underline{87.95} \\

        \rowcolor{blue!6}
        \cellcolor{white}
        & \textbf{LSPD}
        & \textbf{54.22}
        & \textbf{62.65}
        & \underline{71.08}
        & \textbf{79.52}
        & \textbf{81.93}
        & \textbf{85.54}
        & \textbf{89.16} \\
        \midrule

        \multirow{4}{*}{\textbf{AIME24}}
        & KD
        & \textbf{20.00}
        & \textbf{26.67}
        & \textbf{30.00}
        & \textbf{33.33}
        & \underline{33.33}
        & 36.67
        & 40.00 \\

        & OPD
        & \underline{16.67}
        & 20.00
        & 23.33
        & 26.67
        & 30.00
        & 36.67
        & 40.00 \\

        & EOPD
        & 13.33
        & \underline{23.33}
        & \textbf{30.00}
        & \textbf{33.33}
        & \textbf{36.67}
        & \underline{40.00}
        & \underline{43.33} \\

        \rowcolor{blue!6}
        \cellcolor{white}
        & \textbf{LSPD}
        & \underline{16.67}
        & \underline{23.33}
        & \underline{26.67}
        & \underline{30.00}
        & \textbf{36.67}
        & \textbf{43.33}
        & \textbf{46.67} \\
        \midrule

          \multirow{4}{*}{\textbf{AIME25}}
        & KD
        & 10.00
        & \underline{23.33}
        & \underline{26.67}
        & \underline{33.33}
        & \underline{36.67}
        & \underline{43.33}
        & 43.33 \\

        & OPD
        & \textbf{20.00}
        & \textbf{26.67}
        & \textbf{33.33}
        & \textbf{36.67}
        & \textbf{40.00}
        & \underline{43.33}
        & \underline{46.67} \\

        & EOPD
        & \underline{13.33}
        & 20.00
        & \textbf{33.33}
        & \textbf{36.67}
        & \textbf{40.00}
        & \textbf{46.67}
        & \textbf{50.00} \\

        \rowcolor{blue!6}
        \cellcolor{white}
        & \textbf{LSPD}
        & \underline{13.33}
        & 20.00
        & \underline{26.67}
        & \underline{33.33}
        & \underline{36.67}
        & \textbf{46.67}
        & \textbf{50.00} \\
        \bottomrule
    \end{tabular}
    }
    \vspace{-1em}
  \end{table*}

\subsection{Off-policy and LSPD-RB Replay Buffer Results}
\label{sec:off-policy}

Since our proposed objective naturally supports off-policy optimization, we evaluate its ability to leverage off-policy data under two settings. First, we vary the number of optimization steps per rollout batch as $N\in\{1,4,16,64\}$, resulting in increasingly off-policy updates while still training only on the most recently collected rollouts. Second, we consider a fully off-policy variant, LSPD-RB, which maintains a replay buffer containing all previously collected query--response pairs and performs updates using batches sampled from this buffer. Figure~\ref{fig:off-policy} reports the corresponding Avg@16 training curves on AMC23, AIME24, and AIME25 with a Qwen3-4B teacher model and a Qwen3-1.7B-Base student. For all settings in Figure~\ref{fig:off-policy}, including different choices of $N$ and the replay-buffer variant, each training step collects one rollout batch consisting of 64 prompts with 4 responses sampled per prompt.

For the semi-on-policy setting, increasing the number of updates per rollout batch consistently improves sample efficiency, requiring fewer rollout batches to reach saturated performance. This effect is particularly pronounced on AIME24 and AIME25: with $N\geq 4$, performance typically saturates after roughly 30 training steps, whereas $N=1$ requires more than 50 steps. The gains become substantially smaller beyond $N=4$, however, suggesting that the sample-efficiency benefit of additional updates eventually saturates. In contrast, the fully off-policy LSPD-RB variant reaches its saturated performance within approximately 10 training steps, substantially faster than all semi-on-policy variants. These results demonstrate that LSPD can effectively reuse historical data and that replay-based off-policy training can further improve its sample efficiency.

\begin{figure}[t]
    \centering
    \includegraphics[width=1.0\linewidth]{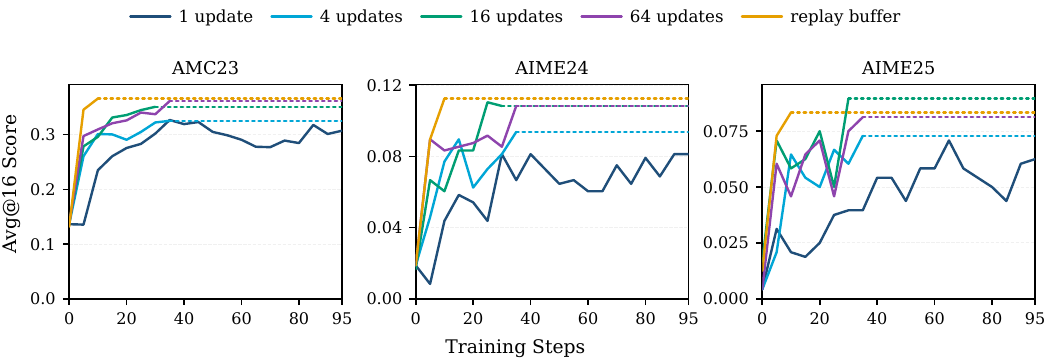}
    \vspace{-1em}
    \caption{\textbf{Off-Policy Training Curves.} We plot the evolution of Avg@16 on AMC23, AIME24, and AIME25 for LSPD with $N\in\{1,4,16,64\}$ optimization steps per rollout batch, together with the replay-buffer variant LSPD-RB. For LSPD-RB, after each newly collected rollout batch is added to the replay buffer, we perform 256 optimization steps on batches sampled from the buffer. We indicate the saturated performance after convergence using dotted horizontal lines. Each training step collects one rollout batch consisting of 64 prompts with 4 responses sampled per prompt.}
    \label{fig:off-policy}
    \vspace{-1em}
\end{figure}

\vspace{.3em}\noindent \textbf{Ablation Studies.} We ablate the entropy regularization component introduced in Eq.~\ref{eqn:rpd-practical} to study its effect on reasoning performance and solution diversity. As shown in Table~\ref{tab:entropy_reg}, removing entropy regularization has only a modest effect on Avg@16, but leads to a substantially larger degradation in Pass@16. Averaged across the six benchmarks, removing entropy regularization decreases Avg@16 by only $0.17$ points for LSPD and $0.75$ points for LSPD-RB, while Pass@16 drops by $2.26$ and $1.64$ points, respectively. Across both methods and all benchmarks, this corresponds to an average degradation of $0.46$ points in Avg@16 compared with $1.95$ points in Pass@16. Notably, Pass@16 decreases in all $12$ method--benchmark comparisons without entropy regularization, whereas the changes in Avg@16 are comparatively small and mixed. These results suggest that entropy regularization primarily improves the diversity and coverage of generated solutions, rather than substantially changing the average per-sample accuracy.

\begin{table}[t]
\centering
\caption{\textbf{Effect of Entropy Regularization.}
Avg@16 and Pass@16 performance of LSPD and LSPD-RB with and without
entropy regularization across six mathematical reasoning benchmarks.}
\vspace{-1em}
\label{tab:entropy_reg}

\renewcommand{\arraystretch}{1.15}
\setlength{\tabcolsep}{3.5pt}

\resizebox{\textwidth}{!}{%
\begin{tabular}{cc|cc|cc|cc|cc|cc|cc}
    \toprule
    \multirow{2}{*}{\textbf{Reg.}}
    & \multirow{2}{*}{\textbf{Method}}
    & \multicolumn{2}{c|}{\textbf{MATH-500}}
    & \multicolumn{2}{c|}{\textbf{Minerva}}
    & \multicolumn{2}{c|}{\textbf{Olympiad}}
    & \multicolumn{2}{c|}{\textbf{AMC23}}
    & \multicolumn{2}{c|}{\textbf{AIME24}}
    & \multicolumn{2}{c}{\textbf{AIME25}} \\

    &
    & \textbf{Avg@16} & \textbf{Pass@16}
    & \textbf{Avg@16} & \textbf{Pass@16}
    & \textbf{Avg@16} & \textbf{Pass@16}
    & \textbf{Avg@16} & \textbf{Pass@16}
    & \textbf{Avg@16} & \textbf{Pass@16}
    & \textbf{Avg@16} & \textbf{Pass@16} \\
    \midrule

    \multirow{2}{*}{\textbf{w/ Ent.}}
    & LSPD
    & 70.86 & 91.80
    & 29.32 & 55.15
    & 33.47 & 59.70
    & 36.97 & 67.47
    & 11.88 & 30.00
    & 8.33 & 23.33 \\

    & LSPD-RB
    & 70.69 & 91.00
    & 29.57 & 54.41
    & 33.50 & 61.04
    & 36.60 & 67.47
    & 11.67 & 36.67
    & 8.33 & 23.33 \\

    \midrule

    \multirow{2}{*}{\textbf{w/o Ent.}}
    & LSPD
    & 71.14 & 90.60
    & 28.65 & 51.84
    & 32.91 & 58.52
    & 37.12 & 66.27
    & 11.46 & 26.67
    & 8.54 & 20.00 \\

    & LSPD-RB
    & 69.39 & 90.60
    & 28.84 & 53.31
    & 33.65 & 60.59
    & 36.67 & 66.27
    & 10.00 & 33.33
    & 7.29 & 20.00 \\

    \bottomrule
\end{tabular}%
}

\end{table}

% \section{An Theoretical View on LSPD, OPD and SFT}
\section{Theoretical analysis}

In this section, we establish the theoretical advantage of the proposed algorithm by explicitly enforcing optimism and exploiting the value-based KL-regularized RL. The detailed proof is deferred to Appendix~\ref{sec:proof-theoretical-guarantee}. Following the notation in Section~\ref{sec:prelim}, we define the cumulative regret as
\begin{align}
\mathrm{Regret}(K) &:=
\textstyle{\sum_{k=1}^{K}}
\mathbb{E}_{\qb\sim\cD,\,\yb\sim\pi_k(\cdot|\qb)}
\left[
\sum_{t=1}^{T}
D_{\mathrm{KL}}\!\left(
\pi_k(\cdot|\xb_{<t})
\|
\pi^*(\cdot|\xb_{<t})
\right)
\right]
\nonumber\\
&=
\textstyle{\sum_{k=1}^{K}}
\mathbb{E}_{\qb\sim\cD}
\left[
D_{\mathrm{KL}}\!\left(
\pi_k(\cdot|\qb)
\,\|\,
\pi^*(\cdot|\qb)
\right)
\right].
\label{eq:regret}
\end{align}
Here $\xb_{<t}=(\qb,\yb_{<t})$, $K$ counts rollout/update rounds, and $T$ is the response-length and the second equality is due to the chain rule of KL divergence. The policy $\pi^*$ denotes the optimal policy. Similarly with~\citet{sriraman2026behavior}, we assume the noisy teacher feedback in $\pi^E$ as detailed in Assumption~\ref{ass:noise}. Then the following assumption is also crucial in understanding OPD:

\begin{assumption}
\label{ass:realizability}
Let
$ \mathcal{R} \subseteq \left\{
r: \mathcal{X}\times\mathcal{Y}
\rightarrow [-B,B] \right\}
$
be a reward class with $B < \infty$. For the optimal policy $\pi^*$ and the reference policy $\pi^{\mathrm{ref}}$, we assume  $r^*(\xb_{<t},y_t) := \log  \frac{ \pi^*(y_t|\xb_{<t}) }{\pi^{\mathrm{ref}}(y_t|\xb_{<t}) } \in \cR$.
\end{assumption}

In practice, one can choose $\pi^{\mathrm{ref}}$ as the initial student model $\pi_0$ thus Assumption~\ref{ass:realizability} assumes the well coverage between the student and teacher model.

\begin{theorem}[Logarithmic reverse-KL regret, informal]
\label{thm:main}
Under Assumptions~\ref{ass:noise} and~\ref{ass:realizability}, with high probability, Algorithm~\ref{alg:rpd-theory} satisfies $\mathrm{Regret}(K) = \mathcal{O} (d_E \log(N_{\mathcal R}K))$ where $d_E, N_{\cR}$ are both complexity measurements in the order of $\tilde \cO(d)$ when the reward is a $d$-dimensional linear function. 
\end{theorem}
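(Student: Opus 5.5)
We follow the optimistic KL-regularized analysis of \citet{zhao2025logarithmic}, adapted to the token-level contextual bandit of Section~\ref{sec:prelim}. Since $\pi^*(\cdot|\xb_{<t}) \propto \pi^{\mathrm{ref}}(\cdot|\xb_{<t})\exp(\eta r^*(\xb_{<t},\cdot))$ and $\pi_{k+1} \propto \pi^{\mathrm{ref}}\exp(\eta r_k^+)$, the per-prefix KL between two Gibbs policies reduces to a reward gap. At a fixed prefix $\xb$, write $\Delta_k = r_{k-1}^+ - r^*$ and $Z_k(\xb) = \EE_{\pi^*}[\exp(\eta\Delta_k)]$. Then
\[
D_{\mathrm{KL}}(\pi_k\|\pi^*) = \eta\,\EE_{\pi_k}[\Delta_k] - \log Z_k(\xb).
\]

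\textbf{Step 1: optimism.} Using Assumption~\ref{ass:noise} (sub-Gaussian teacher noise) and Assumption~\ref{ass:realizability}, a standard least-squares concentration argument applies. Combining a union bound over a cover of $\cR$ of size $N_{\cR}$ with a bounded reward range $B$, we show that with probability $1-\delta$, $r^* \in \mathcal{C}_k$ for all $k$ once $\beta^2 = \mathcal{O}(\log(N_{\cR}K/\delta))$. Consequently $\Delta_k \ge 0$ pointwise, because $r^+_{k-1}$ is a pointwise supremum over a set containing $r^*$.

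\textbf{Step 2: the key quadratic bound.} This is the source of the logarithmic rate rather than $\sqrt{K}$. With $\Delta_k \ge 0$ and $\Delta_k \le 2B$, the KL identity above should be bounded by a \emph{second-order} quantity, $D_{\mathrm{KL}}(\pi_k\|\pi^*) \le C\eta^2 e^{2\eta B}\,\EE_{\pi_k}[\Delta_k^2]$, rather than the first-order $\eta\EE[\Delta_k]$.

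The plan is to view $f(s)=\log \EE_{\pi^{\mathrm{ref}}} e^{\eta(r^*+s\Delta)}$ and note that $D_{\mathrm{KL}}(\pi_k\|\pi^*) = f(0)-f(1)+f'(1)$ is a Bregman divergence of the convex log-partition function. Bounding $f''$ by $\eta^2 \mathrm{Var}(\Delta) \le \eta^2 \EE[\Delta^2]$ under the intermediate Gibbs policies, and changing measure to $\pi_k$ at a cost of $e^{\mathcal{O}(\eta B)}$, then gives the claim.

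I expect this to be the main obstacle. The difficulty is keeping the change-of-measure constant and the sign/optimism argument clean at every prefix, while the prefix distribution itself is induced by $\pi_k$. Summing over $t$ under $\yb\sim\pi_k$ avoids any mismatch, because the regret in Eq.~\ref{eq:regret} is already evaluated under $\pi_k$'s own rollouts.

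\textbf{Step 3: eluder-dimension summation.} On the optimism event, $\Delta_k(\xb,y) \le \sup_{r,r'\in\mathcal{C}_{k-1}} |r-r'|(\xb,y)$, which is the width of the confidence set at the on-policy pair $(\xb_{<t},y_t)\sim\pi_k$. We apply the standard eluder-dimension lemma for squared widths, $\sum_{k=1}^K \min(1,w_k^2) \le \mathcal{O}(d_E\,\beta^2\log K)$, where $d_E$ is the eluder dimension of $\cR$. In the linear case this is the elliptical potential lemma, with $d_E=\tilde\cO(d)$ and $\log N_{\cR} = \tilde\cO(d)$.

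Converting the expectation over $\pi_k$ to the realized rollouts requires either a martingale concentration step, which adds $\log(1/\delta)$, or use of the expected-width version of the eluder lemma. Combining this with Step 2 gives
\[
\mathrm{Regret}(K) \le C\eta^2 e^{2\eta B}\, d_E\,\beta^2 \log K = \mathcal{O}\big(d_E\log(N_{\cR}K)\big),
\]
treating $\eta$, $B$, $T$ as constants.
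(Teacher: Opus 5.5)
Your proof is correct and shares the paper's overall architecture: the per-prefix Gibbs identity $D_{\mathrm{KL}}(\pi_k\|\pi^*)=\mathbb{E}_{\pi_k}[\Delta_k]-\log\mathbb{E}_{\pi^*}[e^{\Delta_k}]$ (with $\eta=1$), uniform optimism from a least-squares ball, a second-order per-prefix bound summed along $\pi_k$'s own rollouts, an eluder summation, and a martingale step. In Step~1 the ball must be centered at $\widehat r_k$, not at the noisy targets as in the main-text set, whose radius would grow like $\sigma^2 kT$.

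The substantive difference is Step~2. Your Bregman/Taylor route is valid:
\begin{itemize}
\item For some $s_0\in[0,1]$, $D_{\mathrm{KL}}(\pi_k\|\pi^*)=\tfrac12 f''(s_0)=\tfrac12\mathrm{Var}_{\pi_{s_0}}(\Delta_k)$.
\item Since $\Delta_k\ge 0$, the density ratio satisfies $\pi_{s_0}/\pi_k\le e^{2B}$.
\item So your claim holds with $C=\tfrac12$, but it pays $e^{2B}$.
\end{itemize}
The paper instead applies the mean-value theorem to $\alpha\mapsto D_{\mathrm{KL}}(\pi_{r^*+\alpha e}\|\pi^*)$, whose derivative is $\alpha\,\mathrm{Var}_{\pi_{r_\alpha}}(e)$. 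It then uses optimism a second time. Because $e\ge 0$,
$\frac{d}{d\alpha}\mathbb{E}_{\pi_{r_\alpha}}[e^2]=\mathbb{E}[e^3]-\mathbb{E}[e^2]\mathbb{E}[e]=\tfrac12\mathbb{E}[(\xi-\xi')^2(\xi+\xi')]\ge 0$
for independent copies $\xi,\xi'$. Hence every intermediate second moment is dominated by the one under $\pi_k$, with no change of measure. This resolves exactly the obstacle you flagged.

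For fixed $B$ both routes give the stated rate. However, the paper's explicit bound $[8\beta^2+2\max\{\lambda,4B^2\}\log(2/\delta)]\,d_E$ is polynomial in $B$, while yours carries $e^{2B}$. This matters because $B$ is precisely the student--teacher coverage constant stressed in the paper's remark. If you insert the monotonicity fact into your own Taylor argument, the $e^{2B}$ disappears, and you even get constant $\tfrac12$ instead of the paper's $1$.

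There are three smaller issues in Step~3.
\begin{itemize}
\item All $T$ tokens of rollout $k$ are scored against the frozen $\mathcal{C}_{k-1}$, so the sequential eluder lemma does not apply verbatim. The paper sidesteps this by defining $d_E$ as the rollout-batched cumulative complexity $\sup\sum_{k,t}\min\{1,U_{k-1}^2\}$, which is at most the sum of $T$ per-position bandit complexities.
\item That definition also absorbs the $\log K$. With the standard eluder dimension, your final expression $d_E\beta^2\log K$ is $\mathcal{O}(d_E\log(N_{\mathcal R}K)\log K)$, not $\mathcal{O}(d_E\log(N_{\mathcal R}K))$.
\item The passage to $\mu_k=\mathbb{E}[S_k|\mathcal{F}_{k-1}]$ must be multiplicative; Azuma--Hoeffding would reintroduce $\sqrt K$. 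The paper uses $0\le S_k\le\max\{\lambda,4B^2\}\,d_E$ and the supermartingale built from $e^{-z}\le 1-(1-e^{-1})z$ to obtain $\sum_k\mu_k\le 2\sum_k S_k+\mathcal{O}(d_E\log(1/\delta))$.
\end{itemize}
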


We defer the detailed proof and further discussion to Appendix~\ref{sec:proof-theoretical-guarantee}, and highlight how Theorem~\ref{thm:main} informs the comparison between reverse-KL-based OPD and forward-KL-based SFT.

\begin{remark}
Theorem~\ref{thm:main} implies a $\widetilde{\cO}(\epsilon^{-1})$ sample complexity for obtaining a mixture policy $\bar\pi = \frac{1}{K}\sum_{k=1}^{K}\pi_k$ whose reverse-KL divergence to $\pi^*$ is at most $\epsilon$ under \textbf{bounded log-ratio reward}. By comparison, \citet{foster2024behavior} establish a $\widetilde{\cO}(\epsilon^{-1})$ sample complexity for attaining an $\epsilon$-small imitation performance gap through maximum likelihood estimation (MLE) with a \textbf{deterministic expert}. Although these guarantees concern different error criteria, they highlight complementary conditions supporting fast learning. 
In reverse KL minimization including OPD and our LSPD, the fast rates depends on the coverage between the student and teacher model with $B<\infty$ according to Assumption~\ref{ass:realizability}. This dependence on the coverage is also reported empirically in \cite{li2026rethinking,xing2026trust,fu2026rethinking}. In contrast, the SFT with MLE exploits expert determinism and exhibits the ``mass-covering'' tendency in minimizing the forward KL. This comparison explains the practical implementations for SFT in establishing an initial policy  followed by reverse-KL-based refinement with adequate coverage.
\end{remark}

% \begin{theorem}[NAILGUN with unknown corruption, \citet{sriraman2026behavior}]
% \label{thm:nail_unknown_corruption}
% Consider a finite policy class $\Pi$ containing a deterministic
% expert $\pi^*$, with noisy feedback from
% $\pi^E=(1-\zeta)\pi^*+\zeta\nu$.
% Assume $\nu_t(y_t|\xb_{<t})\le\rho$ for every $(t,\xb_{<t},y_t)$ and
% $\zeta\le\alpha$, where $\alpha(1+\rho)<1$.
% Without knowing $\zeta$ or $\nu$, NAILGUN returns a policy
% $\widehat\pi$ after $K$ online interaction rounds satisfying
% \begin{equation}
% \mathbb E\!\left[
% D_{\mathrm H^2}\!\left(
% P^{\pi^*},P^{\widehat\pi}
% \right)
% \right]
% \lesssim
% \frac{\log|\Pi|}
% {K\bigl(1-\alpha(1+\rho)\bigr)^2}.
% \end{equation}
% {\color{red} TODO: need further alignment.}
% \end{theorem}

\section{Conclusion}
We presented Least Square Policy Distillation (LSPD), an RL-inspired framework that connects reverse-KL distillation to KL-regularized policy optimization through a teacher-induced reward. This perspective motivates an optimistic least-squares formulation that enables trajectory reuse with a purely off-policy replay buffer. Experiments across six mathematical reasoning benchmarks and multiple teacher--student settings show improved reasoning performance, stronger Pass@$k$ scaling, and greater rollout efficiency. In particular, LSPD-RB achieves performance comparable to vanilla OPD using only the first $25\%$ of rollout batches. Together with a sharp reverse-KL regret guarantee, these results show how value-based RL principles can improve policy distillation by preserving policy diversity and enhancing sample efficiency through off-policy learning.

\bibliography{iclr2027_conference}
\bibliographystyle{iclr2027_conference}

\clearpage
\appendix
\section{Hyperparameters and Implementation Details}

\subsection{Details on Huber-Type Penalty in Practical Implementation}
\label{sec:huber}
For improved numerical stability, we employ a Huber-type penalty in the practical implementations of LSPD and LSPD-RB. Specifically, let $\Delta_t(\pi,\pi^E)=\log \pi(y_t|\xb_{<t})-\log \pi^E(y_t|\xb_{<t})$, we define
\begin{align}
\label{eq:huber}
\psi\!\left(\Delta_t(\pi,\pi^E)\right)
=
\begin{cases}
\Delta_t^2(\pi,\pi^E),
& \left|\Delta_t(\pi,\pi^E)\right| \leq c, \\[4pt]
2c\left|\Delta_t(\pi,\pi^E)\right| - c^2,
& \left|\Delta_t(\pi,\pi^E)\right| > c,
\end{cases}
\end{align}
where $c>0$ denotes the threshold separating the quadratic and linear regimes. This formulation preserves the original quadratic objective when the log-probability discrepancy is moderate, while replacing the quadratic growth with a linear penalty for large discrepancies, thereby reducing the influence of extreme outliers and improving optimization stability. We provide an ablation study of this design choice in Appendix~\ref{sec:addition-exp}.

\subsection{Training Details}

Our experimental implementation of LSPD and LSPD-RB builds upon the on-policy distillation codebase of \citet{li2026rethinking} and adopts the EOS corrections in \citet{yang2026eos}. Detailed training hyperparameters are summarized in Table~\ref{tab:hyperparameters}. For comparison, we reproduce the OPD~\citep{lu2025onpolicydistillation}, EOPD~\citep{jin2026entropy}, and KD~\citep{hinton2015distilling,kim2016sequence} baselines within the same implementation framework and under the same experimental setup. Across all methods, we keep the training batch size, rollout and evaluation configurations, and optimizer settings fixed to ensure a controlled comparison. All experiments are conducted using four NVIDIA RTX PRO 6000 GPUs, each equipped with 96GB of VRAM.

\begin{table*}[t]
      \centering
      \small
      \setlength{\tabcolsep}{10pt}
      \renewcommand{\arraystretch}{1.08}
      \caption{\textbf{Hyperparameters for LSPD and LSPD-RB.} We report the training hyperparameters used for our proposed LSPD and LSPD-RB methods across all experiments. For the off-policy experiments without a replay buffer, we vary the number of optimizer updates per rollout iteration over $N\in\{1,4,16,64\}$.}

      \label{tab:hyperparameters}
      \begin{tabular}{@{}lcc@{}}
          \toprule
          \textbf{Hyperparameter}
          & \textbf{w/o Replay Buffer}
          & \textbf{w/ Replay Buffer} \\
          \midrule
          Entropy regularization coefficient $\mu^{-1}$
              & $0.1$ & $0.1$ \\
          Squared-loss robustification
              & Huber & Huber \\
          Huber transition threshold $c$
              & $5.0$ & $5.0$ \\
          \midrule
          Optimizer
              & AdamW & AdamW \\
          Learning rate
              & $10^{-6}$ & $10^{-6}$ \\
          Learning-rate schedule
              & Constant & Constant \\
          Warmup steps
              & $0$ & $0$ \\
          Optimizer momentum coefficients
              & $(0.9,\,0.999)$ & $(0.9,\,0.999)$ \\
          Weight decay
              & $0.01$ & $0.01$ \\
          Maximum gradient norm
              & $1.0$ & $1.0$ \\
          \midrule
          Maximum prompt length (tokens)
              & $1{,}024$ & $1{,}024$ \\
          Maximum response length (tokens)
              & $7{,}168$ & $7{,}168$ \\
          Rollout sampling temperature
              & $1.0$ & $1.0$ \\
          Teacher distribution temperature
              & $1.0$ & $1.0$ \\
          Vocabulary truncation
              & None & None \\
          Prompts per rollout iteration
              & $64$ & $64$ \\
          Responses per prompt
              & $4$ & $4$ \\
          New trajectories per rollout iteration
              & $256$ & $256$ \\
          Trajectories per optimizer update
              & $64$ & $64$ \\
          Optimizer updates per rollout iteration
              & $4$ & $256$ \\
          Rollout iterations
              & $100$ & $100$ \\
          \midrule
          Replay buffer capacity (trajectories)
              & --- & $65{,}536$ \\
          Replay eviction policy
              & --- & First in, first out \\
          Replay sampling
              & --- & Uniform \\
          \bottomrule
      \end{tabular}
  \end{table*}
  
\subsection{Prompt Template}
\label{sec:prompt-template}

For both training rollouts and evaluation, we follow the mathematical reasoning prompt template used in Test-Time Reinforcement Learning (TTRL) \citep{zuo2026ttrl}. Specifically, we append the following instruction to each problem:

\begin{promptbox}{Prompt Template}
\texttt{Please reason step by step, and put your final answer within
\textbackslash boxed\{\}.}
\end{promptbox}

\section{Additional Experimental Results}
\label{sec:addition-exp}
\paragraph{Out-of-Domain Results.}
We further evaluate whether improvements from LSPD transfer beyond the mathematical reasoning domain used for distillation. Specifically, although the student is trained exclusively on mathematical dataset DAPO-Math-17K \citep{yu2026dapo}, we evaluate KD~\citep{hinton2015distilling,kim2016sequence}, OPD~\citep{lu2025onpolicydistillation}, EOPD~\citep{jin2026entropy}, and our proposed LSPD and replay-buffer variant LSPD-RB on four out-of-domain benchmarks spanning general knowledge, commonsense reasoning, and code generation: GPQA~\citep{rein2023gpqa}, MMLU~\citep{hendrycks2020measuring}, WinoGrande~\citep{sakaguchi2021winogrande}, and HumanEval~\citep{chen2021evaluating}. We report 0-shot accuracy on GPQA Main (448 questions), 5-shot accuracy on MMLU (14,042 questions), and 0-shot validation accuracy on WinoGrande (1,267 questions). For HumanEval, we report accuracy using one greedy completion for each of the 164 tasks. For all out-of-domain experiments, we leverage a Qwen3-4B non-thinking teacher and a Qwen3-1.7B-Base student.

Results are shown in Table~\ref{tab:general_benchmarks}. Averaged across the four out-of-domain benchmarks, LSPD achieves a score of $56.02$, improving upon the strongest baseline, EOPD ($55.01$), by $+1.02$ percentage points. LSPD-RB further increases the average score to $56.84$, corresponding to a $+1.84$ point improvement over EOPD. Notably, LSPD-RB achieves the best performance on GPQA, MMLU, and HumanEval, while LSPD obtains the best result on WinoGrande. These results indicate that the gains from LSPD are not restricted to the in-domain mathematical reasoning tasks used during training; instead, the learned policy retains and, in several cases, improves general reasoning and coding capabilities, suggesting better preservation of the student's out-of-domain capabilities during distillation.

\begin{table}[t]
    \centering
    \caption{\textbf{Results on Out-of-Domain Benchmarks.}
    Performance comparison of KD, OPD, EOPD, LSPD, and LSPD-RB on
    GPQA, MMLU, WinoGrande, and HumanEval. The student is trained only on
    mathematical reasoning data. The best result on each benchmark is shown
    in \textbf{bold}, and the second-best result is \underline{underlined}.}
    \label{tab:general_benchmarks}

    \renewcommand{\arraystretch}{1.15}
    \setlength{\tabcolsep}{8pt}

    \begin{tabular}{lcccc}
        \toprule
        \textbf{Method}
        & \textbf{GPQA}
        & \textbf{MMLU}
        & \textbf{WinoGrande}
        & \textbf{HumanEval} \\
        \midrule
        KD      & \underline{30.13} & 61.24 & 63.85 & 64.63 \\
        OPD     & 27.68 & 61.14 & 63.61 & 67.07 \\
        EOPD    & 27.90 & 61.47 & \underline{64.80} & 65.85 \\
        \rowcolor{blue!6}
        LSPD    & 29.91 & \underline{61.61} & \textbf{64.88} & \underline{67.68} \\
        \rowcolor{blue!6}
        LSPD-RB & \textbf{30.58} & \textbf{62.03} & 64.64 & \textbf{70.12} \\
        \bottomrule
    \end{tabular}
\end{table}

\paragraph{Ablation on Huber-Type Robust Penalty.}
We ablate the Huber-type robust penalty introduced in Eq.~\ref{eq:huber}. Specifically, we distill a Qwen3-1.7B-Base student from a Qwen3-4B non-thinking teacher using LSPD-RB, with and without the Huber-type penalty. The results are shown in Figure~\ref{fig:huber-ablation}. Across AMC23, AIME24, and AIME25, the Avg@16 results show that incorporating the Huber-type penalty consistently improves performance, yielding an average gain of $+0.50$ points.

We further analyze the distribution of token-level log-probability differences using the step-10 checkpoint of the distilled Qwen3-1.7B-Base model. We sample 128 responses from 32 training prompts in DAPO-Math-17K, resulting in approximately $335$K evaluated tokens. We find that $99.655\%$ of tokens fall within the quadratic region of Eq.~\ref{eq:huber}, whereas only $0.345\%$ exhibit sufficiently large log-probability discrepancies to enter the linear region. This suggests that the objective behaves predominantly as a quadratic penalty during training, while the linear tail robustly handles rare large discrepancies and improves training stability.

\begin{figure*}[t]
  \centering
  \begin{minipage}[b]{0.43\textwidth}
      \centering
      \includegraphics[width=\linewidth]
          {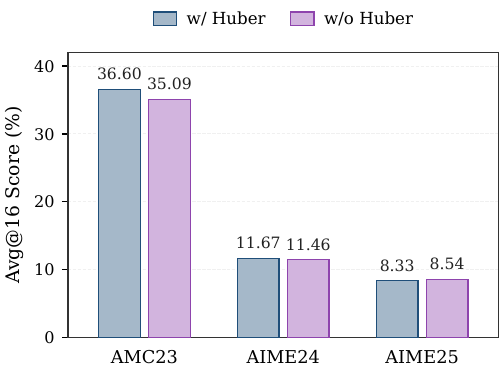}
      \par\smallskip
      {\small (a) Performance with and without Huber.\par}
  \end{minipage}\hfill
  \begin{minipage}[b]{0.55\textwidth}
      \centering
      \includegraphics[width=\linewidth]
          {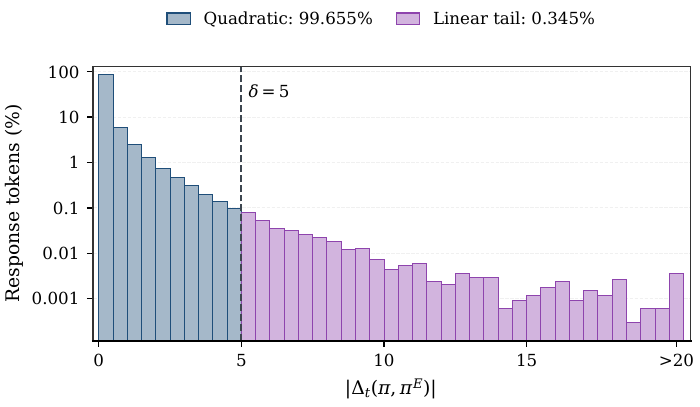}
      \par\smallskip
      {\small (b) Log-probability difference magnitudes.\par}
  \end{minipage}

  \caption{
     \textbf{Huber-Type Penalty Ablation.}
    We evaluate the effect of the Huber-type penalty on AMC23, AIME24, and AIME25. The left figure compares Avg@16 performance with and without the Huber-type penalty, while the right figure shows the distribution of the log-ratio discrepancy $\Delta_t(\pi,\pi^E)$. Overall, incorporating the Huber-type penalty yields a small but consistent improvement in performance. We further observe that the objective operates almost entirely in its quadratic regime: only $0.345\%$ of the evaluated token-level discrepancies satisfy $\Delta_t(\pi,\pi^E)>5$ and therefore fall into the linear tail. This suggests that the Huber formulation primarily behaves as a quadratic penalty in practice, while its linear tail provides additional robustness to the small fraction of large log-ratio discrepancies.
  }
  \label{fig:huber-ablation}
\end{figure*}

\begin{algorithm}[t]
\caption{Least Square Policy Distillation (Theoretical)}
\label{alg:rpd-theory}
\begin{algorithmic}[1]
\REQUIRE Reference policy $\pi^{\mathrm{ref}}$, reward class $\mathcal R$, teacher feedback $\pi_k^E$, rounds $K$, response-length bound $T$, parameters $\lambda,\beta$; $\eta=1$.
\STATE Initialize $\mathcal D_0=\emptyset$, $\mathcal C_0=\mathcal R$, and $\pi_1=\pi_{r_0^+}$ with $r_0^+=\sup_{r\in\mathcal R}r$ pointwise.
\FOR{$k=1,\ldots,K$}
    \STATE Sample $\qb_k\sim\mathcal D$ and initialize $\xb_{k,<1}=(\qb_k,\emptyset)$.
    \FOR{$t=1,\ldots,T$}
        \STATE Sample $y_{k,t}\sim\pi_k(\cdot|\xb_{k,<t})$.
        \STATE Observe $\ell_{k,t}=\log\!\left(\pi_k^E(y_{k,t}|\xb_{k,<t})/\pi^{\mathrm{ref}}(y_{k,t}|\xb_{k,<t})\right)$.
        \STATE Append the token: $\xb_{k,<t+1}=(\xb_{k,<t},y_{k,t})$; stop at termination.
    \ENDFOR
    \STATE Store the prefix--token pairs and targets from rollout $k$ in $\mathcal D_k$.
    \STATE Fit $\widehat r_k$ to all stored token targets by Eq.~\ref{eq:app-least-squares}.
    \STATE Construct the estimator-centered confidence set $\mathcal C_k$ in Eq.~\ref{eq:app-confidence-set}.
    \STATE Set $r_k^+(\xb_{<t},y_t)=\sup_{r\in\mathcal C_k}r(\xb_{<t},y_t)$ for every prefix--token pair.
    \STATE Set $\pi_{k+1}(y_t|\xb_{<t})\propto\pi^{\mathrm{ref}}(y_t|\xb_{<t})\exp\!\left(r_k^+(\xb_{<t},y_t)\right)$.
\ENDFOR
\end{algorithmic}
\end{algorithm}

\section{Theoretical Analysis and Proofs}
\label{sec:proof-theoretical-guarantee}

This appendix proves Theorem~\ref{thm:main} by adapting the contextual-bandit analysis of \citet{zhao2025logarithmic} to conditional next-token distributions. We apply the sharp bandit inequality at each fixed prefix and then sum it along student-generated responses using the KL chain rule. No transition model, action-value function, or Bellman recursion is required. The reduction uses the special reward $r^*=\log(\pi^*/\pi^{\mathrm{ref}})$, whose target partition function equals one at every prefix; deterministic token concatenation alone would not justify a bandit analysis for an arbitrary sequential reward.

We first assume $N_{\mathcal R}=|\mathcal R|<\infty$ and give the covering-number extension in Appendix~\ref{sec:app-covering}. Throughout, $k$ indexes rollout/update rounds and $t$ indexes tokens in the finite vocabulary $\mathcal Y$. Queries are drawn from the dataset distribution $\mathcal D$ of Section~\ref{sec:prelim}. The policy $\pi_k$ remains fixed while generating $\yb_k$, and all its token observations become available for the update to $\pi_{k+1}$. Responses have at most $T$ tokens. Equivalently, completed responses may be padded with an absorbing token on which all policies agree and all rewards and regression errors are zero.

\subsection{Setup and Reverse-KL Identity}

Let $\mathcal F_{k-1}$ be the history before sampling query $\qb_k$. Within rollout $k$, let $\mathcal F_{k,t-1}$ contain this history, the current query, and the tokens and teacher feedback preceding position $t$. Thus $\xb_{k,<t}=(\qb_k,\yb_{k,<t})$ is known before drawing $y_{k,t}$. We use $\mathcal F_k$ for the history after the complete rollout and its feedback.

\begin{assumption}[Conditionally unbiased teacher]
\label{ass:noise}
There exists a target policy $\pi^*$ such that, for every observed prefix--token pair $(\xb_{k,<t},y_{k,t})$,
\begin{align*}
    \log\pi_k^E(y_{k,t}|\xb_{k,<t})
    &=\log\pi^*(y_{k,t}|\xb_{k,<t})+\epsilon_{k,t},\\
    \mathbb E[\epsilon_{k,t}|\mathcal F_{k,t-1},y_{k,t}]&=0.
\end{align*}
Moreover, the noise is conditionally $\sigma$-sub-Gaussian: for every $u\in\mathbb R$,
\[
    \mathbb E\!\left[
        \exp(u\epsilon_{k,t})
        \,\middle|\,
        \mathcal F_{k,t-1},y_{k,t}
    \right]
    \leq\exp(\sigma^2u^2/2).
\]
\end{assumption}

\begin{remark}
Explicit modeling of teacher noise has precedent in the theoretical study of on-policy distillation. In particular, \citet{sriraman2026behavior} distinguish a clean expert $\pi^*$ from an observed teacher corrupted at the token level according to
$\pi^E(y_t|\xb_{<t})=(1-\zeta)\pi^*(y_t|\xb_{<t})+\zeta\nu_t(y_t|\xb_{<t})$,
where $\zeta\in[0,1)$ is the corruption level and $\nu_t$ is a corruption policy. Their analysis considers known corruption and, for unknown corruption, imposes a deterministic clean expert together with smoothness and margin conditions on the corruption. Assumption~\ref{ass:noise} follows the same motivation of distinguishing target behavior from imperfect teacher feedback, but adopts a different statistical model: the observed token-level feedback $\log\pi_k^E(y_{k,t}|\xb_{k,<t})$ is conditionally unbiased for $\log\pi^*(y_{k,t}|\xb_{k,<t})$, with sub-Gaussian deviations. Conditional unbiasedness centers the induced reward-regression target at $r^*$, while the tail condition enables concentration under adaptive prefix and token sampling. Thus, introducing explicit structural assumptions on teacher noise is consistent with prior theoretical work. Our particular condition is a complementary analytical idealization tailored to least-squares reward estimation, rather than a consequence of their mixture-corruption model.
\end{remark}

\begin{definition}[Reward representation and regularized objective]
\label{def:app-reward-objective}
For a fixed reference policy $\pi^{\mathrm{ref}}$, define the target token reward by
\begin{equation}
    r^*(\xb_{<t},y_t)
    :=
    \log\frac{\pi^*(y_t|\xb_{<t})}{\pi^{\mathrm{ref}}(y_t|\xb_{<t})}.
    \label{eq:app-target-reward}
\end{equation}
For any reward function $r$, define its prefix-wise partition function and conditional Gibbs policy by
\begin{align}
    Z_r(\xb_{<t})
    &:=
    \sum_{y_t\in\mathcal Y}
    \pi^{\mathrm{ref}}(y_t|\xb_{<t})
    \exp\!\bigl(r(\xb_{<t},y_t)\bigr),
    \label{eq:app-partition}\\
    \pi_r(y_t|\xb_{<t})
    &:=
    \frac{\pi^{\mathrm{ref}}(y_t|\xb_{<t})
    \exp\!\bigl(r(\xb_{<t},y_t)\bigr)}{Z_r(\xb_{<t})}.
    \label{eq:app-gibbs-policy}
\end{align}
The sequence distribution is the autoregressive product of these conditionals, not a single sequence-level Gibbs normalization. With $\eta=1$, the regularized objective is
\begin{equation}
    J(\pi)
    :=
    \mathbb E_{\qb\sim\mathcal D,\,\yb\sim\pi(\cdot|\qb)}
    \left[
        \sum_{t=1}^{T}
        \left(
            r^*(\xb_{<t},y_t)
            -
            \log\frac{\pi(y_t|\xb_{<t})}{\pi^{\mathrm{ref}}(y_t|\xb_{<t})}
        \right)
    \right].
    \label{eq:app-regularized-objective}
\end{equation}
\end{definition}

\paragraph{Reverse-KL identity.}
Substituting Eq.~\ref{eq:app-target-reward} into Eq.~\ref{eq:app-regularized-objective} and conditioning on each prefix gives
\begin{align}
    J(\pi)
    &=-\mathbb E_{\qb\sim\mathcal D,\,\yb\sim\pi(\cdot|\qb)}
    \left[
        \sum_{t=1}^{T}
        D_{\mathrm{KL}}\!\left(
            \pi(\cdot|\xb_{<t})\,\|\,\pi^*(\cdot|\xb_{<t})
        \right)
    \right]
    \nonumber\\
    &=-\mathbb E_{\qb\sim\mathcal D}
    \left[
        D_{\mathrm{KL}}\!\left(
            \pi(\cdot|\qb)\,\|\,\pi^*(\cdot|\qb)
        \right)
    \right],
    \label{eq:app-objective-kl-identity}
\end{align}
where the second equality is the autoregressive KL chain rule. In particular, $Z_{r^*}(\xb_{<t})=1$ and $\pi_{r^*}=\pi^*$ at every prefix. Hence $J(\pi^*)=0$, and
\begin{equation}
    J(\pi^*)-J(\pi)
    =
    \mathbb E_{\qb\sim\mathcal D,\,\yb\sim\pi(\cdot|\qb)}
    \left[
        \sum_{t=1}^{T}
        D_{\mathrm{KL}}\!\left(
            \pi(\cdot|\xb_{<t})\,\|\,\pi^*(\cdot|\xb_{<t})
        \right)
    \right].
    \label{eq:app-gap-kl-identity}
\end{equation}
Thus $\mathrm{Regret}(K)=\sum_{k=1}^{K}[J(\pi^*)-J(\pi_k)]$ is exactly the regret in Eq.~\ref{eq:regret}, not a token-averaged surrogate.

\begin{definition}[Regression target and reward estimator]
\label{def:app-regression}
At token $t$ of rollout $k$, the observed regression target is
\begin{equation}
    \ell_{k,t}
    :=
    \log\frac{\pi_k^E(y_{k,t}|\xb_{k,<t})}
    {\pi^{\mathrm{ref}}(y_{k,t}|\xb_{k,<t})}.
    \label{eq:app-observed-target}
\end{equation}
Let $\mathcal D_k$ store the prefix--token pairs
$\{(\xb_{i,<u},y_{i,u}):1\leq i\leq k,\,1\leq u\leq T\}$
and their observed targets. The empirical version of Eq.~\ref{eqn:regression} is
\begin{equation}
    \widehat r_k
    \in\arg\min_{r\in\mathcal R}
    \sum_{i=1}^{k}\sum_{u=1}^{T}
    \bigl(r(\xb_{i,<u},y_{i,u})-\ell_{i,u}\bigr)^2.
    \label{eq:app-least-squares}
\end{equation}
We use $\mathcal D_0=\emptyset$ and any $\widehat r_0\in\mathcal R$.
\end{definition}

Under Assumption~\ref{ass:noise},
\begin{equation}
    \ell_{k,t}=r^*(\xb_{k,<t},y_{k,t})+\epsilon_{k,t},
    \qquad
    \mathbb E[\epsilon_{k,t}|\mathcal F_{k,t-1},y_{k,t}]=0.
    \label{eq:app-regression-model}
\end{equation}
The conditional formulation permits adaptive prefixes and does not require independent observations within a response.

\begin{definition}[Cumulative eluder complexity]
\label{def:app-eluder-complexity}
Fix $\lambda>0$. After $k$ completed rollouts, define
\begin{equation}
    U_k(\xb_{<t},y_t)
    :=
    \sup_{r_1,r_2\in\mathcal R}
    \frac{
        |r_1(\xb_{<t},y_t)-r_2(\xb_{<t},y_t)|
    }{
        \sqrt{
            \lambda+
            \sum_{i=1}^{k}\sum_{u=1}^{T}
            \bigl(r_1(\xb_{i,<u},y_{i,u})-r_2(\xb_{i,<u},y_{i,u})\bigr)^2
        }
    },
    \label{eq:app-uncertainty}
\end{equation}
including $k=0$ with an empty denominator sum. For $K$ rollout rounds, define
\begin{equation}
    d_E(\mathcal R,\lambda,K,T)
    :=
    \sup_{(\qb_k,\yb_k)_{k=1}^{K}}
    \sum_{k=1}^{K}\sum_{t=1}^{T}
    \min\!\left\{
        1,\,U_{k-1}(\xb_{k,<t},y_{k,t})^2
    \right\},
    \label{eq:app-eluder-complexity}
\end{equation}
where the supremum ranges over valid query--response sequences, $\xb_{k,<t}=(\qb_k,\yb_{k,<t})$, and $U_{k-1}$ uses only their first $k-1$ completed rollouts. We abbreviate this quantity as $d_E$.
\end{definition}

This is the rollout-batched token-level version of the cumulative contextual-bandit complexity. It bounds the clipped squared widths along every realized adaptive sequence. Importantly, its denominator does not use tokens from the current rollout before the policy is updated. At $T=1$, it is the usual bandit definition. Retaining only historical observations at the same token position can only enlarge each width; consequently, $d_E$ is at most the sum of the $T$ corresponding $K$-round bandit complexities. Thus response-length dependence is explicit in the complexity definition, without introducing a transition-estimation or Bellman-error term.

\begin{definition}[Confidence set, bonus, and optimistic reward]
\label{def:app-optimistic-reward}
For $\delta\in(0,1)$, set
\begin{equation}
    \beta
    :=
    \max\!\left\{
        2B,\,
        \sqrt{16\sigma^2\log\frac{2N_{\mathcal R}K}{\delta}+2\lambda}
    \right\}.
    \label{eq:app-beta}
\end{equation}
Define the estimator-centered confidence set by
\begin{equation}
    \mathcal C_k
    :=
    \left\{
        r\in\mathcal R:
        \lambda+
        \sum_{i=1}^{k}\sum_{u=1}^{T}
        \bigl(r(\xb_{i,<u},y_{i,u})-\widehat r_k(\xb_{i,<u},y_{i,u})\bigr)^2
        \leq\beta^2
    \right\}.
    \label{eq:app-confidence-set}
\end{equation}
Its pointwise optimistic reward and an auxiliary confidence bonus are
\begin{equation}
    r_k^+(\xb_{<t},y_t)
    :=\sup_{r\in\mathcal C_k}r(\xb_{<t},y_t),
    \qquad
    b_k(\xb_{<t},y_t)
    :=\min\{2B,\,\beta U_k(\xb_{<t},y_t)\}.
    \label{eq:app-bonus}
\end{equation}
\end{definition}

For noisy feedback, Eq.~\ref{eq:app-confidence-set} makes the theoretical confidence construction precise by centering at $\widehat r_k$: an uncentered squared residual against noisy targets includes the accumulated noise energy and is not, by itself, a logarithmic-radius confidence set. Algorithm~\ref{alg:rpd-theory} and the proof both use the same pointwise optimistic reward $r_k^+$; $b_k$ is only a bound on its estimation error. Since $\widehat r_k\in\mathcal C_k$, the set is always nonempty. The policy update is
\begin{equation}
    \pi_{k+1}(y_t|\xb_{<t})
    =\pi_{r_k^+}(y_t|\xb_{<t}).
    \label{eq:app-policy-update}
\end{equation}
In particular, $\mathcal C_0=\mathcal R$ and $\pi_1=\pi_{r_0^+}$. These are conditional token updates; they do not assert that a prefix-wise Gibbs policy globally maximizes an arbitrary sequence-reward objective.

\subsection{Objective Decomposition and the Sharp One-Step Bound}

For a fixed prefix $\xb_{<t}$, define
\begin{equation}
    \Delta(\xb_{<t},r)
    :=
    -\log Z_r(\xb_{<t})
    +
    \mathbb E_{y_t\sim\pi_r(\cdot|\xb_{<t})}
    \left[r(\xb_{<t},y_t)-r^*(\xb_{<t},y_t)\right].
    \label{eq:app-functional-gap}
\end{equation}

\begin{lemma}[Objective decomposition]
\label{lem:app-objective-decomposition}
For every $k\geq1$,
\begin{equation}
    J(\pi^*)-J(\pi_k)
    =
    \mathbb E_{\qb\sim\mathcal D,\,\yb\sim\pi_k(\cdot|\qb)}
    \left[
        \sum_{t=1}^{T}
        \bigl(
            \Delta(\xb_{<t},r_{k-1}^+)-\Delta(\xb_{<t},r^*)
        \bigr)
    \right].
    \label{eq:app-objective-decomposition}
\end{equation}
\end{lemma}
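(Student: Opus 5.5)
The identity is a direct computation at each fixed prefix, followed by summing along the rollout. First I would treat the $r^*$ term. Since $Z_{r^*}(\xb_{<t})=1$ and $\pi_{r^*}=\pi^*$ at every prefix (shown after Eq.~\ref{eq:app-objective-kl-identity}), the definition in Eq.~\ref{eq:app-functional-gap} gives $\Delta(\xb_{<t},r^*)=-\log 1+0=0$. So it suffices to prove
\[
J(\pi^*)-J(\pi_k)=\mathbb E_{\qb\sim\mathcal D,\,\yb\sim\pi_k(\cdot|\qb)}\Bigl[\textstyle\sum_{t=1}^{T}\Delta(\xb_{<t},r_{k-1}^+)\Bigr].
\]

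Next comes the per-prefix computation. By Eq.~\ref{eq:app-policy-update}, $\pi_k=\pi_{r}$ with $r:=r_{k-1}^+$; for $k=1$ this is $\pi_1=\pi_{r_0^+}$. Taking logarithms in Eq.~\ref{eq:app-gibbs-policy} gives, for every token,
\[
\log\frac{\pi_r(y_t|\xb_{<t})}{\pi^*(y_t|\xb_{<t})}=r(\xb_{<t},y_t)-\log Z_r(\xb_{<t})-r^*(\xb_{<t},y_t),
\]
because $\log\pi^*-\log\pi^{\mathrm{ref}}=r^*$. Taking the expectation over $y_t\sim\pi_r(\cdot|\xb_{<t})$, and noting that $\log Z_r$ does not depend on $y_t$, yields
\[
D_{\mathrm{KL}}\bigl(\pi_k(\cdot|\xb_{<t})\|\pi^*(\cdot|\xb_{<t})\bigr)=\Delta(\xb_{<t},r_{k-1}^+).
\]
This requires $\pi^*$ to have full support wherever $\pi^{\mathrm{ref}}$ does. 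That holds because $r^*\in\mathcal R$ is bounded by Assumption~\ref{ass:realizability}, and the same bound keeps all quantities finite.

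Finally, I would plug this into Eq.~\ref{eq:app-gap-kl-identity} with $\pi=\pi_k$, which states that $J(\pi^*)-J(\pi_k)$ is the expectation over $\qb\sim\mathcal D$, $\yb\sim\pi_k$ of $\sum_t D_{\mathrm{KL}}(\pi_k(\cdot|\xb_{<t})\|\pi^*(\cdot|\xb_{<t}))$. Replacing each term by $\Delta(\xb_{<t},r_{k-1}^+)-0$ gives Eq.~\ref{eq:app-objective-decomposition}.

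The only delicate point is bookkeeping, not estimation. The policy $\pi_k$ is built from the data-dependent reward $r_{k-1}^+$, which is measurable with respect to $\mathcal F_{k-1}$. The expectation in $J(\pi_k)$ is therefore taken conditionally on $\mathcal F_{k-1}$, with $\pi_k$ treated as fixed, which matches how the regret in Eq.~\ref{eq:regret} is defined. I would also check the padding convention: after termination the absorbing token contributes $r=r^*=0$, so $Z_r=1$ and $\Delta=0$ there. Terminated positions then contribute nothing to either side of the identity.
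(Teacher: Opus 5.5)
Your proposal is correct and follows essentially the same route as the paper. You derive the prefix-wise identity $D_{\mathrm{KL}}(\pi_k(\cdot|\xb_{<t})\|\pi^*(\cdot|\xb_{<t}))=\Delta(\xb_{<t},r_{k-1}^+)$ from the Gibbs form $\pi_k=\pi_{r_{k-1}^+}$, use $Z_{r^*}=1$ to get $\Delta(\xb_{<t},r^*)=0$, and then sum over tokens and average under $\pi_k$ via Eq.~\ref{eq:app-gap-kl-identity}; your extra remarks on support, measurability of $r_{k-1}^+$, and the padding convention are harmless bookkeeping that the paper leaves implicit.
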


\begin{proof}
For a fixed prefix, Eq.~\ref{eq:app-gibbs-policy} gives
\begin{equation}
    \log\frac{\pi_k(y_t|\xb_{<t})}{\pi^{\mathrm{ref}}(y_t|\xb_{<t})}
    =r_{k-1}^+(\xb_{<t},y_t)-\log Z_{r_{k-1}^+}(\xb_{<t}).
    \label{eq:app-gibbs-log-ratio}
\end{equation}
Using $r^*=\log(\pi^*/\pi^{\mathrm{ref}})$, we obtain the prefix-wise identity
\begin{align}
    D_{\mathrm{KL}}\!\left(
        \pi_k(\cdot|\xb_{<t})\,\|\,\pi^*(\cdot|\xb_{<t})
    \right)
    &=
    -\log Z_{r_{k-1}^+}(\xb_{<t})
    \nonumber\\
    &\quad+
    \mathbb E_{y_t\sim\pi_k(\cdot|\xb_{<t})}
    \left[r_{k-1}^+(\xb_{<t},y_t)-r^*(\xb_{<t},y_t)\right].
    \label{eq:app-context-gap}
\end{align}
Because $Z_{r^*}(\xb_{<t})=1$ and $\pi_{r^*}=\pi^*$, the right-hand side is $\Delta(\xb_{<t},r_{k-1}^+)-\Delta(\xb_{<t},r^*)$, with $\Delta(\xb_{<t},r^*)=0$. Sum over tokens and average over prefixes generated by $\pi_k$, using Eq.~\ref{eq:app-gap-kl-identity}.
\end{proof}

\begin{lemma}[Gradient of the functional gap]
\label{lem:app-functional-gradient}
For every fixed prefix $\xb_{<t}$ and token $y_t$,
\begin{align}
    \frac{\partial\Delta(\xb_{<t},r)}{\partial r(\xb_{<t},y_t)}
    &=
    \pi_r(y_t|\xb_{<t})
    \bigl(r(\xb_{<t},y_t)-r^*(\xb_{<t},y_t)\bigr)
    \nonumber\\
    &\quad-
    \pi_r(y_t|\xb_{<t})
    \mathbb E_{y_t'\sim\pi_r(\cdot|\xb_{<t})}
    \left[r(\xb_{<t},y_t')-r^*(\xb_{<t},y_t')\right].
    \label{eq:app-functional-gradient}
\end{align}
\end{lemma}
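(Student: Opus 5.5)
The plan is a direct computation at a fixed prefix $\xb_{<t}$. Since $\mathcal Y$ is finite, $\Delta(\xb_{<t},r)$ depends on $r$ only through the finite vector $\bigl(r(\xb_{<t},y)\bigr)_{y\in\mathcal Y}$. I treat these coordinates as free real variables, with $r^*$ and $\pi^{\mathrm{ref}}$ held fixed. I then split $\Delta$ into its two terms from Eq.~\ref{eq:app-functional-gap}, namely $-\log Z_r(\xb_{<t})$ and $\sum_{y}\pi_r(y|\xb_{<t})\bigl(r(\xb_{<t},y)-r^*(\xb_{<t},y)\bigr)$, and differentiate each with respect to the coordinate $r(\xb_{<t},y_t)$.

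\textbf{First term.} From Eq.~\ref{eq:app-partition},
\[
\partial Z_r/\partial r(\xb_{<t},y_t)=\pi^{\mathrm{ref}}(y_t|\xb_{<t})\exp\bigl(r(\xb_{<t},y_t)\bigr),
\]
so $\partial(-\log Z_r)/\partial r(\xb_{<t},y_t)=-\pi_r(y_t|\xb_{<t})$.

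\textbf{Softmax Jacobian.} From Eq.~\ref{eq:app-gibbs-policy}, $\log\pi_r(y|\xb_{<t})=\log\pi^{\mathrm{ref}}(y|\xb_{<t})+r(\xb_{<t},y)-\log Z_r(\xb_{<t})$. Differentiating gives
\[
\partial\pi_r(y|\xb_{<t})/\partial r(\xb_{<t},y_t)=\pi_r(y|\xb_{<t})\bigl(\mathds 1[y=y_t]-\pi_r(y_t|\xb_{<t})\bigr).
\]

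\textbf{Second term.} Write $g(y)=r(\xb_{<t},y)-r^*(\xb_{<t},y)$. The product rule gives two contributions:
\[
\sum_y\frac{\partial\pi_r(y)}{\partial r(\xb_{<t},y_t)}\,g(y)+\sum_y\pi_r(y)\,\frac{\partial g(y)}{\partial r(\xb_{<t},y_t)}.
\]
\begin{itemize}
\item By the softmax Jacobian, the first sum equals $\pi_r(y_t)g(y_t)-\pi_r(y_t)\,\mathbb E_{y'\sim\pi_r}[g(y')]$.
\item Since $\partial g(y)/\partial r(\xb_{<t},y_t)=\mathds 1[y=y_t]$, the second sum equals $+\pi_r(y_t|\xb_{<t})$.
\end{itemize}

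\textbf{Combining.} The $+\pi_r(y_t|\xb_{<t})$ from the second term exactly cancels the $-\pi_r(y_t|\xb_{<t})$ from the first term. What remains is
\[
\pi_r(y_t|\xb_{<t})\,g(y_t)-\pi_r(y_t|\xb_{<t})\,\mathbb E_{y'\sim\pi_r(\cdot|\xb_{<t})}[g(y')],
\]
which is Eq.~\ref{eq:app-functional-gradient}.

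There is no real obstacle. The only point needing care is tracking this cancellation correctly, together with the centering term produced by the softmax Jacobian. As a sanity check, the gradient vanishes at $r=r^*$ and more generally whenever $r-r^*$ is constant in $y_t$, which matches $\Delta$ being invariant under such shifts.
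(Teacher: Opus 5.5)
Your proof is correct and follows the paper's argument essentially step for step. Both differentiate $-\log Z_r$ to obtain $-\pi_r(y_t|\xb_{<t})$, apply the softmax Jacobian (the paper lists the same-token and different-token cases separately, where you use an indicator), and observe that the direct $+\pi_r(y_t|\xb_{<t})$ term from differentiating $r-r^*$ inside the expectation cancels it, leaving the centered expression.
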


\begin{proof}
Direct differentiation of Eqs.~\ref{eq:app-partition}--\ref{eq:app-gibbs-policy}, holding the prefix fixed, yields
\begin{align}
    \frac{\partial Z_r(\xb_{<t})}{\partial r(\xb_{<t},y_t)}
    &=\pi^{\mathrm{ref}}(y_t|\xb_{<t})e^{r(\xb_{<t},y_t)},
    \label{eq:app-partition-derivative}\\
    \frac{\partial\pi_r(y_t|\xb_{<t})}{\partial r(\xb_{<t},y_t)}
    &=\pi_r(y_t|\xb_{<t})\bigl(1-\pi_r(y_t|\xb_{<t})\bigr),
    \label{eq:app-policy-derivative-same}\\
    \frac{\partial\pi_r(y_t'|\xb_{<t})}{\partial r(\xb_{<t},y_t)}
    &=-\pi_r(y_t'|\xb_{<t})\pi_r(y_t|\xb_{<t}),
    \qquad y_t'\neq y_t.
    \label{eq:app-policy-derivative-different}
\end{align}
The derivative of $-\log Z_r(\xb_{<t})$ is $-\pi_r(y_t|\xb_{<t})$. Differentiating the expectation in Eq.~\ref{eq:app-functional-gap} produces a direct term $+\pi_r(y_t|\xb_{<t})$, which cancels it. The remaining terms give Eq.~\ref{eq:app-functional-gradient}. No derivative of a prefix distribution is taken.
\end{proof}

Define the uniform optimism event
\begin{equation}
    \mathcal E_{\mathrm{opt}}
    :=
    \left\{
        r_k^+(\xb_{<t},y_t)\geq r^*(\xb_{<t},y_t),
        \ \forall k\in\{0,\ldots,K\},
        \ \forall(\xb_{<t},y_t)\in\mathcal X\times\mathcal Y
    \right\}.
    \label{eq:app-optimism-event}
\end{equation}

\begin{lemma}[Sharp one-step bound under optimism]
\label{lem:app-sharp-one-step}
On $\mathcal E_{\mathrm{opt}}$, for every $k\geq1$,
\begin{equation}
    J(\pi^*)-J(\pi_k)
    \leq
    \mathbb E_{\qb\sim\mathcal D,\,\yb\sim\pi_k(\cdot|\qb)}
    \left[
        \sum_{t=1}^{T}
        \bigl(r_{k-1}^+(\xb_{<t},y_t)-r^*(\xb_{<t},y_t)\bigr)^2
    \right].
    \label{eq:app-sharp-one-step}
\end{equation}
\end{lemma}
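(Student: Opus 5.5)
The plan is to prove the bound separately at each fixed prefix and then sum it using Lemma~\ref{lem:app-objective-decomposition}. By that lemma, and since $\Delta(\xb_{<t},r^*)=0$, it suffices to show that for every prefix $\xb_{<t}$, writing $g:=r_{k-1}^+(\xb_{<t},\cdot)-r^*(\xb_{<t},\cdot)$,
\begin{equation*}
\Delta(\xb_{<t},r_{k-1}^+)-\Delta(\xb_{<t},r^*)\leq \mathbb E_{y_t\sim\pi_k(\cdot|\xb_{<t})}\bigl[g(y_t)^2\bigr].
\end{equation*}
The outer expectation in Eq.~\ref{eq:app-objective-decomposition} already draws $y_t\sim\pi_k(\cdot|\xb_{<t})$ given the prefix, and $\pi_k=\pi_{r_{k-1}^+}$. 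So the token-level expectation on the right-hand side folds back into the sequence-level expectation, which gives Eq.~\ref{eq:app-sharp-one-step}.

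For the per-prefix bound I will interpolate along the segment $r_s:=r^*+s\,g$ for $s\in[0,1]$. This gives $\pi_{r_s}(y)\propto\pi^*(y)e^{s g(y)}$, because $Z_{r^*}=1$. By the chain rule and Lemma~\ref{lem:app-functional-gradient}, applied with $r=r_s$ so that $r_s-r^*=s g$,
\begin{equation*}
\frac{d}{ds}\Delta(\xb_{<t},r_s)=\sum_{y}\frac{\partial\Delta}{\partial r(\xb_{<t},y)}\Big|_{r_s}g(y)
= s\Bigl(\mathbb E_{\pi_{r_s}}[g^2]-\bigl(\mathbb E_{\pi_{r_s}}[g]\bigr)^2\Bigr)
= s\,\mathrm{Var}_{\pi_{r_s}}(g).
\end{equation*}
Integrating over $[0,1]$ then gives $\Delta(r_{k-1}^+)-\Delta(r^*)=\int_0^1 s\,\mathrm{Var}_{\pi_{r_s}}(g)\,ds\le\int_0^1 s\,\mathbb E_{\pi_{r_s}}[g^2]\,ds$. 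Smoothness in $s$ is immediate because the vocabulary is finite and rewards are bounded by $B$.

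The main step is to compare $\mathbb E_{\pi_{r_s}}[g^2]$ with its value at $s=1$. This is the only place where optimism is used. On $\mathcal E_{\mathrm{opt}}$ we have $g\geq 0$ pointwise. Since $\pi_{r_s}$ is an exponential tilt of $\pi^*$ by $s g$,
\begin{equation*}
\frac{d}{ds}\mathbb E_{\pi_{r_s}}[g^2]=\mathrm{Cov}_{\pi_{r_s}}(g^2,g).
\end{equation*}
This covariance is nonnegative, because $g^2$ and $g$ are comonotone on $g\ge0$; one way to see it is via Chebyshev's sum inequality, or the pairwise identity $\tfrac12\mathbb E[(g^2(Y)-g^2(Y'))(g(Y)-g(Y'))]\ge0$ for independent copies $Y,Y'$. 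Hence $\mathbb E_{\pi_{r_s}}[g^2]\le\mathbb E_{\pi_{r_1}}[g^2]=\mathbb E_{\pi_k}[g^2]$ for all $s\in[0,1]$. Substituting into the integral gives the bound with an extra factor $\int_0^1 s\,ds=\tfrac12$, which is stronger than Eq.~\ref{eq:app-sharp-one-step}.

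Without optimism, $g$ could take negative values, the monotonicity would fail, and the tilted second moment could not be controlled by the one under $\pi_k$. That is why the statement is restricted to $\mathcal E_{\mathrm{opt}}$.
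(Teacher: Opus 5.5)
Your proposal is correct and follows essentially the same route as the paper. Both fix a prefix, interpolate $r^*+s\,g$, identify the derivative of $\Delta$ along the path as $s\,\mathrm{Var}_{\pi_{r_s}}(g)$, and use the nonnegativity of $\mathbb E[g^3]-\mathbb E[g^2]\,\mathbb E[g]$ (valid because $g\ge 0$ on $\mathcal E_{\mathrm{opt}}$) to bound the tilted second moment by the one under $\pi_k$. The only difference is that you integrate the derivative instead of applying the mean-value theorem to $m(\alpha)=\alpha\,\mathbb E_{\pi_{r_\alpha}}[e^2]$, which harmlessly gives you an extra factor $\tfrac12$.
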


\begin{proof}
Fix a prefix $\xb_{<t}$ and write
\begin{equation}
    e(y_t):=r_{k-1}^+(\xb_{<t},y_t)-r^*(\xb_{<t},y_t)\geq0,
    \qquad r_\alpha:=r^*+\alpha e,
    \quad\alpha\in[0,1],
    \label{eq:app-interpolation}
\end{equation}
where the interpolation is only over the token-reward vector at this fixed prefix. Apply the one-dimensional mean-value theorem to $h(\alpha):=\Delta(\xb_{<t},r_\alpha)$. For some $\bar\alpha\in[0,1]$, Lemma~\ref{lem:app-functional-gradient} yields
\begin{align}
    \Delta(\xb_{<t},r_{k-1}^+)-\Delta(\xb_{<t},r^*)
    &=\bar\alpha\left(
        \mathbb E_{\pi_{r_{\bar\alpha}}}[e^2]
        -\mathbb E_{\pi_{r_{\bar\alpha}}}[e]^2
    \right)
    \nonumber\\
    &\leq\bar\alpha\,\mathbb E_{\pi_{r_{\bar\alpha}}}[e^2].
    \label{eq:app-mvt-bound}
\end{align}
Here and below, the token expectations in this proof are conditional on the fixed prefix. To compare the intermediate policy with $\pi_k$, define
\begin{equation}
    m(\alpha):=\alpha\,\mathbb E_{\pi_{r_\alpha}}[e^2].
    \label{eq:app-monotone-function}
\end{equation}
The exponential-family derivative identity gives
\begin{equation}
    m'(\alpha)
    =\mathbb E_{\pi_{r_\alpha}}[e^2]
    +\alpha\left(
        \mathbb E_{\pi_{r_\alpha}}[e^3]
        -\mathbb E_{\pi_{r_\alpha}}[e^2]\,
        \mathbb E_{\pi_{r_\alpha}}[e]
    \right).
    \label{eq:app-monotone-derivative}
\end{equation}
The covariance term is nonnegative because $e\geq0$. Indeed, for independent copies $\xi,\xi'$ of $e(y_t)$ under $\pi_{r_\alpha}$,
\begin{equation}
    \mathbb E[\xi^3]-\mathbb E[\xi^2]\mathbb E[\xi]
    =\frac12\mathbb E\!\left[(\xi-\xi')^2(\xi+\xi')\right]
    \geq0.
    \label{eq:app-positive-covariance}
\end{equation}
Consequently $m'(\alpha)\geq0$, and
\begin{equation}
    \bar\alpha\,\mathbb E_{\pi_{r_{\bar\alpha}}}[e^2]
    =m(\bar\alpha)\leq m(1)
    =\mathbb E_{y_t\sim\pi_k(\cdot|\xb_{<t})}[e(y_t)^2].
    \label{eq:app-intermediate-policy-bound}
\end{equation}
Combining Eqs.~\ref{eq:app-mvt-bound} and~\ref{eq:app-intermediate-policy-bound} proves the sharp contextual-bandit inequality at this prefix. Only after obtaining this pointwise bound do we sum over $t$ and average under the student $\pi_k$. Lemma~\ref{lem:app-objective-decomposition} and conditional expectation then give Eq.~\ref{eq:app-sharp-one-step}. This avoids comparing prefix distributions under different interpolated policies.
\end{proof}

\subsection{Least-Squares Confidence and Uniform Optimism}

\begin{lemma}[Finite-class least-squares confidence]
\label{lem:app-least-squares-confidence}
Under Assumptions~\ref{ass:noise} and~\ref{ass:realizability}, with probability at least $1-\delta/2$, simultaneously for every $k\in\{1,\ldots,K\}$,
\begin{equation}
    \sum_{i=1}^{k}\sum_{u=1}^{T}
    \bigl(\widehat r_k(\xb_{i,<u},y_{i,u})-r^*(\xb_{i,<u},y_{i,u})\bigr)^2
    \leq 8\sigma^2\log\frac{2N_{\mathcal R}K}{\delta}.
    \label{eq:app-empirical-confidence}
\end{equation}
\end{lemma}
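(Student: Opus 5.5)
The plan is the standard finite-class least-squares argument: exploit that $\widehat r_k$ minimizes the empirical loss, then control the cross term between the noise and the function differences by a martingale exponential inequality with a union bound over $\mathcal R$ and $k$.

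\textbf{Basic inequality.} Write $\ell_{i,u}=r^*(\xb_{i,<u},y_{i,u})+\epsilon_{i,u}$ using Eq.~\ref{eq:app-regression-model}; this holds since $r^*\in\mathcal R$ by Assumption~\ref{ass:realizability}. For any $r\in\mathcal R$ set $g_r:=r-r^*$ evaluated at the stored pairs, and define
\[
S_k(r):=\sum_{i\le k}\sum_{u\le T}g_r(\xb_{i,<u},y_{i,u})^2,\qquad
M_k(r):=\sum_{i\le k}\sum_{u\le T}\epsilon_{i,u}\,g_r(\xb_{i,<u},y_{i,u}).
\]
Because $r^*$ is feasible in Eq.~\ref{eq:app-least-squares}, expanding the squares gives $S_k(\widehat r_k)\le 2M_k(\widehat r_k)$. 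It therefore suffices to show that, uniformly over $r\in\mathcal R$ and $k\le K$,
\[
2M_k(r)-\tfrac12 S_k(r)\le 4\sigma^2\log(2N_{\mathcal R}K/\delta).
\]
Applying this at $r=\widehat r_k$ gives $\tfrac12 S_k\le 4\sigma^2\log(\cdot)$, which is Eq.~\ref{eq:app-empirical-confidence}.

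\textbf{Concentration for fixed $r$.} Order the token observations lexicographically in $(i,u)$. The multiplier $g_r(\xb_{i,<u},y_{i,u})$ is measurable with respect to $\sigma(\mathcal F_{i,u-1},y_{i,u})$. By Assumption~\ref{ass:noise}, $\epsilon_{i,u}$ is conditionally centered and $\sigma$-sub-Gaussian given this $\sigma$-field. Hence for any $\gamma>0$ the process
\[
\exp\bigl(\gamma M_k(r)-\gamma^2\sigma^2 S_k(r)/2\bigr)
\]
is a nonnegative supermartingale with expectation at most one. This uses the tower property token by token inside each rollout; the intra-rollout adaptivity is exactly why the assumption conditions on $\mathcal F_{k,t-1}$ and $y_{k,t}$. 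Take $\gamma=1/(4\sigma^2)$. Then $\gamma^2\sigma^2/2=\gamma/8$, so Markov's inequality gives, for fixed $r$ and $k$, with probability at least $1-\delta'$,
\[
M_k(r)\le \tfrac18 S_k(r)+4\sigma^2\log(1/\delta').
\]
Doubling, $2M_k\le \tfrac14 S_k+8\sigma^2\log(1/\delta')$, which is even stronger than the target with slack. Combined with $S_k\le 2M_k$ this yields $\tfrac34 S_k\le 8\sigma^2\log(1/\delta')$, i.e. $S_k\le 8\sigma^2\log(1/\delta')$ up to the constant bookkeeping.

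\textbf{Union bound and constants.} Take a union bound over the $N_{\mathcal R}$ functions and the $K$ rounds with $\delta'=\delta/(2N_{\mathcal R}K)$, so the total failure probability is at most $\delta/2$. The main point needing care is choosing $\gamma$ so the constant lands exactly at $8\sigma^2$. A cleaner route, which I would try first, is to apply the supermartingale directly to the loss difference:
\[
\exp\Bigl(-\tfrac{1}{8\sigma^2}\bigl[L_k(r)-L_k(r^*)\bigr]\Bigr),\qquad L_k(r):=\sum_{i\le k}\sum_{u\le T}\bigl(r(\xb_{i,<u},y_{i,u})-\ell_{i,u}\bigr)^2.
\]
Since $L_k(r)-L_k(r^*)=S_k(r)-2M_k(r)$, the sub-Gaussian bound gives conditional expectation at most $\exp(-S_k/(8\sigma^2)+S_k/(32\sigma^2))\le\exp(-S_k/(16\sigma^2))$. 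This works whenever the quadratic coefficients close, and combined with $L_k(\widehat r_k)\le L_k(r^*)$ it yields $S_k(\widehat r_k)\le c\,\sigma^2\log(2N_{\mathcal R}K/\delta)$. Adjusting the exponent parameter then matches the constant $8$. No boundedness of $\mathcal R$ is needed beyond finiteness of the class.
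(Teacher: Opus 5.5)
Your argument is the paper's own proof. It uses the least-squares basic inequality $S_k(\widehat r_k)\le 2M_k(\widehat r_k)$, a per-$r$ exponential supermartingale built token by token from the conditional sub-Gaussian assumption, Markov's inequality, and a union bound over the $N_{\mathcal R}K$ pairs $(r,k)$. Your reduction target $2M_k(r)-\tfrac12 S_k(r)\le 4\sigma^2\iota$, with $\iota:=\log(2N_{\mathcal R}K/\delta)$, is exactly the paper's intermediate inequality. The one real defect is your choice of $\gamma$. It breaks the constant, which is the only quantitative content of the lemma.

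With $\gamma=1/(4\sigma^2)$ you obtain $2M_k\le\tfrac14 S_k+8\sigma^2\iota$. This is \emph{not} stronger than the target $2M_k\le\tfrac12 S_k+4\sigma^2\iota$: it implies it only when $S_k\ge 16\sigma^2\iota$, which is outside the regime the lemma is about. Combined with $S_k\le 2M_k$ it gives $\tfrac34 S_k\le 8\sigma^2\iota$, that is $S_k\le\tfrac{32}{3}\sigma^2\iota$, not $8\sigma^2\iota$. So your ``i.e.'' step is an arithmetic slip. Your loss-difference variant with exponent $1/(8\sigma^2)$ is the same choice $\gamma=1/(4\sigma^2)$ and gives the same $\tfrac{32}{3}$ (or $16$ after you round $3/32$ down to $1/16$).

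In general, Markov plus $S_k\le 2M_k$ gives $S_k\le 2\iota/[\gamma(1-\gamma\sigma^2)]$. This is minimized at $\gamma=1/(2\sigma^2)$ with value exactly $8\sigma^2\iota$, so $8$ is the best this argument can deliver, and only for that $\gamma$. That is the paper's choice: $\exp\bigl(\tfrac{1}{2\sigma^2}M_k(r)-\tfrac{1}{8\sigma^2}S_k(r)\bigr)$ is a supermartingale, and Markov then gives your target verbatim.

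In your loss-difference form, use exponent $1/(4\sigma^2)$ instead. The same process then reads $\exp\bigl(-\tfrac{1}{4\sigma^2}[L_k(r)-L_k(r^*)]+\tfrac{1}{8\sigma^2}S_k(r)\bigr)$. Since the loss difference is nonpositive at $r=\widehat r_k$, you get $S_k(\widehat r_k)\le 8\sigma^2\iota$ directly. Write it this way, with the predictable term $S_k$ inside the process, rather than as a conditional-expectation bound in which $S_k$ appears as if it were deterministic.

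Finally, any $\gamma\propto\sigma^{-2}$ requires $\sigma>0$. The noiseless case needs separate treatment, as the paper notes: there $M_k\equiv 0$, which forces $S_k(\widehat r_k)=0$.
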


\begin{proof}
For $r\in\mathcal R$, write
$D_{i,u}(r):=r(\xb_{i,<u},y_{i,u})-r^*(\xb_{i,<u},y_{i,u})$.
By Eq.~\ref{eq:app-regression-model}, the excess squared loss at a sampled token is
\begin{align}
    &\bigl(r(\xb_{i,<u},y_{i,u})-\ell_{i,u}\bigr)^2
    -\bigl(r^*(\xb_{i,<u},y_{i,u})-\ell_{i,u}\bigr)^2
    \nonumber\\
    &\hspace{3em}=D_{i,u}(r)^2-2D_{i,u}(r)\epsilon_{i,u}.
    \label{eq:app-excess-square-loss}
\end{align}
For $\sigma>0$, conditional sub-Gaussianity implies that, for every fixed $r$, the exponential process obtained by revealing tokens and their feedback in chronological order is a nonnegative supermartingale. At the end of rollout $k$, it equals
\begin{equation}
    \exp\!\left(
        \frac{1}{2\sigma^2}\sum_{i=1}^{k}\sum_{u=1}^{T}
            D_{i,u}(r)\epsilon_{i,u}
        -\frac{1}{8\sigma^2}\sum_{i=1}^{k}\sum_{u=1}^{T}D_{i,u}(r)^2
    \right).
    \label{eq:app-supermartingale}
\end{equation}
The conditional noise assumption is applied after the sampled token is revealed, so adaptivity of the prefix and token does not invalidate this supermartingale. Markov's inequality and a union bound over $r\in\mathcal R$ and the $K$ rollout endpoints yield, with probability at least $1-\delta/2$,
\begin{equation}
    2\sum_{i=1}^{k}\sum_{u=1}^{T}D_{i,u}(r)\epsilon_{i,u}
    \leq
    \frac12\sum_{i=1}^{k}\sum_{u=1}^{T}D_{i,u}(r)^2
    +4\sigma^2\log\frac{2N_{\mathcal R}K}{\delta}
    \label{eq:app-martingale-bound}
\end{equation}
for every such $r$ and $k$. Only $K$ fitting checkpoints are union-bounded, not $KT$ separate estimators. Since $\widehat r_k$ minimizes the empirical loss and $r^*\in\mathcal R$,
\begin{equation}
    \sum_{i=1}^{k}\sum_{u=1}^{T}D_{i,u}(\widehat r_k)^2
    \leq
    2\sum_{i=1}^{k}\sum_{u=1}^{T}D_{i,u}(\widehat r_k)\epsilon_{i,u}.
    \label{eq:app-erm-comparison}
\end{equation}
Substitute $r=\widehat r_k$ into the uniform event in Eq.~\ref{eq:app-martingale-bound} and rearrange to obtain Eq.~\ref{eq:app-empirical-confidence}. For $\sigma=0$, the feedback is noiseless and the same claim follows directly from least-squares optimality.
\end{proof}

\begin{lemma}[Uniform optimism]
\label{lem:app-uniform-optimism}
Under Assumptions~\ref{ass:noise} and~\ref{ass:realizability}, with probability at least $1-\delta/2$, simultaneously for all $k\in\{0,\ldots,K\}$ and all prefix--token pairs,
\begin{equation}
    r^*\in\mathcal C_k,
    \qquad
    0\leq r_k^+(\xb_{<t},y_t)-r^*(\xb_{<t},y_t)
    \leq 2b_k(\xb_{<t},y_t).
    \label{eq:app-two-sided-optimism}
\end{equation}
In particular, $\mathcal E_{\mathrm{opt}}$ holds on this event.
\end{lemma}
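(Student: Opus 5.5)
I will prove Lemma~\ref{lem:app-uniform-optimism} on the event of Lemma~\ref{lem:app-least-squares-confidence}, which has probability at least $1-\delta/2$. The first step is membership $r^*\in\mathcal C_k$.

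For $k=0$ this is immediate, since $\mathcal C_0=\mathcal R$ and $r^*\in\mathcal R$ by Assumption~\ref{ass:realizability}. For $k\geq1$, Eq.~\ref{eq:app-empirical-confidence} gives
\[
\lambda+\sum_{i\le k,u\le T}\bigl(r^*-\widehat r_k\bigr)^2(\xb_{i,<u},y_{i,u})\le \lambda+8\sigma^2\log\frac{2N_{\mathcal R}K}{\delta}\le\beta^2,
\]
by the choice of $\beta$ in Eq.~\ref{eq:app-beta}. Hence $r^*\in\mathcal C_k$. Since $r_k^+$ is the pointwise supremum over $\mathcal C_k$, the lower bound $r_k^+\ge r^*$ follows at every prefix--token pair and every $k$. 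This is exactly $\mathcal E_{\mathrm{opt}}$.

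For the upper bound, fix $(\xb_{<t},y_t)$ and any $r\in\mathcal C_k$. I will show $r(\xb_{<t},y_t)-r^*(\xb_{<t},y_t)\le 2b_k$ and then take the supremum. Two bounds are available.

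\textbf{Trivial bound.} Since $r,r^*\in[-B,B]$, we have $r-r^*\le 2B$.

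\textbf{Width bound.} By definition of $U_k$ in Eq.~\ref{eq:app-uncertainty}, applied with $r_1=r$ and $r_2=r^*$,
\[
|r-r^*|(\xb_{<t},y_t)\le U_k(\xb_{<t},y_t)\sqrt{\lambda+\textstyle\sum(r-r^*)^2}.
\]
Expand the square using the triangle inequality through $\widehat r_k$:
\[
\textstyle\sum(r-r^*)^2\le 2\sum(r-\widehat r_k)^2+2\sum(\widehat r_k-r^*)^2.
\]
Both $r$ and $r^*$ lie in $\mathcal C_k$, so each sum is at most $\beta^2-\lambda$. This gives $\lambda+\sum(r-r^*)^2\le 4\beta^2-3\lambda\le 4\beta^2$, and therefore $|r-r^*|\le 2\beta U_k$.

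Combining the two, $r-r^*\le\min\{2B,2\beta U_k\}\le 2\min\{2B,\beta U_k\}=2b_k$. The last inequality is a one-line case check: if $\beta U_k\ge 2B$, the minimum is $2B\le 4B$; otherwise it is at most $2\beta U_k$. Taking the supremum over $r\in\mathcal C_k$ yields the two-sided bound in Eq.~\ref{eq:app-two-sided-optimism}.

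Two technical points need care, though neither should be a real obstacle.

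\textbf{Supremum in the width.} $U_k$ is defined as a supremum over pairs in $\mathcal R$, and the pair $(r,r^*)$ belongs to $\mathcal R$. If $r=r^*$ on all data the denominator is just $\sqrt\lambda$, and the bound still holds.

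\textbf{Case $k=0$.} Here the confidence-set argument is unavailable, and only the trivial bound $2B\le 2b_0$ is needed. This holds because $U_0\ge |r_1-r_2|/\sqrt\lambda$. To keep it clean I will simply use $\min\{2B,\cdot\}\le 2\min\{2B,\cdot\}$ as above, which also covers $k=0$.

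Everything is therefore deterministic once the event of Lemma~\ref{lem:app-least-squares-confidence} holds, so the probability is at least $1-\delta/2$ uniformly over $k$ and all pairs.
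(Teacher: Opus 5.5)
Your proof is correct and is essentially the paper's argument. You get membership $r^*\in\mathcal C_k$ from Lemma~\ref{lem:app-least-squares-confidence} and the choice of $\beta$, derive the lower bound from membership, and obtain the upper bound by combining the range bound $2B$ with the inequality $(a+b)^2\le 2a^2+2b^2$ through $\widehat r_k$ and the definition of $U_k$. One correction to your $k=0$ aside: the claim $2B\le 2b_0$ is false in general (if all of $\mathcal R$ agrees at a pair, then $U_0=b_0=0$). Nothing is lost, though, because your main width argument applies verbatim at $k=0$ with empty sums, since $\lambda\le\beta^2$. This gives $|r-r^*|\le\sqrt\lambda\,U_0\le 2\beta U_0$, which is exactly how the paper handles the empty-design case.
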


\begin{proof}
Work on the event of Lemma~\ref{lem:app-least-squares-confidence}. For $k\geq1$, Eqs.~\ref{eq:app-beta} and~\ref{eq:app-empirical-confidence} imply
\begin{equation}
    \lambda+
    \sum_{i=1}^{k}\sum_{u=1}^{T}
    \bigl(\widehat r_k(\xb_{i,<u},y_{i,u})-r^*(\xb_{i,<u},y_{i,u})\bigr)^2
    \leq\beta^2/2\leq\beta^2.
    \label{eq:app-pointwise-confidence}
\end{equation}
Thus $r^*\in\mathcal C_k$. This also holds for $k=0$, since $\mathcal C_0=\mathcal R$.

For every $r\in\mathcal C_k$, both $r$ and $r^*$ lie in the same empirical ball centered at $\widehat r_k$. Using $(a+b)^2\leq2a^2+2b^2$ gives
\begin{equation}
    \lambda+
    \sum_{i=1}^{k}\sum_{u=1}^{T}
    \bigl(r(\xb_{i,<u},y_{i,u})-r^*(\xb_{i,<u},y_{i,u})\bigr)^2
    \leq\lambda+4(\beta^2-\lambda)\leq4\beta^2.
    \label{eq:app-confidence-diameter}
\end{equation}
The definition of $U_k$ therefore implies
$|r(\xb_{<t},y_t)-r^*(\xb_{<t},y_t)|\leq2\beta U_k(\xb_{<t},y_t)$.
Also, all functions in $\mathcal R$ take values in $[-B,B]$. Taking the pointwise supremum over $\mathcal C_k$ yields
\begin{equation}
    r_k^+(\xb_{<t},y_t)-r^*(\xb_{<t},y_t)
    \leq\min\{2B,\,2\beta U_k(\xb_{<t},y_t)\}
    \leq2b_k(\xb_{<t},y_t).
    \label{eq:app-error-bonus-bound}
\end{equation}
Finally, target membership gives
\begin{equation}
    r_k^+(\xb_{<t},y_t)-r^*(\xb_{<t},y_t)
    =\sup_{r\in\mathcal C_k}r(\xb_{<t},y_t)-r^*(\xb_{<t},y_t)
    \geq0.
    \label{eq:app-optimism-lower}
\end{equation}
Together these prove Eq.~\ref{eq:app-two-sided-optimism}, including the empty-design case.
\end{proof}

\subsection{Proof of Theorem~\ref{thm:main}}

\begin{proof}
For the actual rollout $k$, define the accumulated squared token-reward error
\begin{equation}
    S_k
    :=\sum_{t=1}^{T}
    \bigl(r_{k-1}^+(\xb_{k,<t},y_{k,t})-r^*(\xb_{k,<t},y_{k,t})\bigr)^2,
    \qquad
    \mu_k:=\mathbb E[S_k|\mathcal F_{k-1}].
    \label{eq:app-rollout-error}
\end{equation}
On the event in Lemma~\ref{lem:app-uniform-optimism}, the reverse-KL identity and Lemma~\ref{lem:app-sharp-one-step} imply
\begin{equation}
    \mathrm{Regret}(K)
    \leq\sum_{k=1}^{K}\mu_k,
    \qquad
    S_k\leq4\sum_{t=1}^{T}b_{k-1}(\xb_{k,<t},y_{k,t})^2.
    \label{eq:app-regret-to-bonus}
\end{equation}
Since $\beta\geq2B$, the clipped bonus satisfies
\begin{equation}
    b_{k-1}(\xb_{<t},y_t)^2
    \leq\beta^2\min\{1,\,U_{k-1}(\xb_{<t},y_t)^2\}.
    \label{eq:app-bonus-square}
\end{equation}
For every realized sequence of rollouts, Definition~\ref{def:app-eluder-complexity} gives
\begin{equation}
    \sum_{k=1}^{K}\sum_{t=1}^{T}
    \min\{1,\,U_{k-1}(\xb_{k,<t},y_{k,t})^2\}
    \leq d_E.
    \label{eq:app-eluder-pathwise}
\end{equation}
Consequently, $\sum_{k=1}^{K}S_k\leq4\beta^2d_E$ on the confidence event.

\paragraph{From observed errors to predictable regret.}
A pathwise bound on $\sum_kS_k$ does not by itself bound $\sum_k\mathbb E[S_k|\mathcal F_{k-1}]$ with high probability. We supply the required concentration step while keeping the same logarithmic regret order. Define the pointwise diameter
\begin{equation}
    \omega(\xb_{<t},y_t)
    :=\sup_{r_1,r_2\in\mathcal R}
    |r_1(\xb_{<t},y_t)-r_2(\xb_{<t},y_t)|,
    \qquad c_0:=\max\{\lambda,4B^2\}.
    \label{eq:app-class-diameter}
\end{equation}
The set $\mathcal C_{k-1}$ is nonempty and contained in $\mathcal R$, and $r^*\in\mathcal R$. Thus, even outside the confidence event,
$|r_{k-1}^+-r^*|\leq\omega$ pointwise. Since $U_0=\omega/\sqrt\lambda$ and $\omega\leq2B$,
\begin{equation}
    \omega(\xb_{<t},y_t)^2
    \leq c_0\min\{1,U_0(\xb_{<t},y_t)^2\}.
    \label{eq:app-diameter-width}
\end{equation}
Any valid response can appear as the first rollout in the supremum defining $d_E$. Therefore
\begin{equation}
    0\leq S_k
    \leq c_0\sum_{t=1}^{T}
    \min\{1,U_0(\xb_{k,<t},y_{k,t})^2\}
    \leq c_0d_E=:M
    \label{eq:app-rollout-error-range}
\end{equation}
for every $k$, without conditioning on optimism. If $M=0$, all these errors and the regret are zero. Otherwise, for $z\in[0,1]$, the convexity bound
$e^{-z}\leq1-(1-e^{-1})z$ yields
\begin{equation}
    \mathbb E\!\left[e^{-S_k/M}\mid\mathcal F_{k-1}\right]
    \leq\exp\!\left(-(1-e^{-1})\mu_k/M\right).
    \label{eq:app-predictable-mgf}
\end{equation}
Hence
$\exp\!\left(M^{-1}[(1-e^{-1})\sum_{k=1}^{j}\mu_k-\sum_{k=1}^{j}S_k]\right)$
is a nonnegative supermartingale in $j$. Markov's inequality gives a second event, of probability at least $1-\delta/2$, on which
\begin{equation}
    \sum_{k=1}^{K}\mu_k
    \leq\frac{\sum_{k=1}^{K}S_k+M\log(2/\delta)}{1-e^{-1}}
    \leq2\sum_{k=1}^{K}S_k+2M\log(2/\delta).
    \label{eq:app-predictable-error-bound}
\end{equation}
Intersecting this event with that of Lemma~\ref{lem:app-uniform-optimism} gives probability at least $1-\delta$. On their intersection,
\begin{equation}
    \mathrm{Regret}(K)
    \leq
    \left[8\beta^2+2\max\{\lambda,4B^2\}\log\frac{2}{\delta}\right]d_E.
    \label{eq:app-explicit-regret-bound}
\end{equation}
Substituting Eq.~\ref{eq:app-beta} yields
\begin{equation}
    \mathrm{Regret}(K)
    =\mathcal O\!\left(
        \left[
            B^2+\lambda
            +\sigma^2\log\frac{2N_{\mathcal R}K}{\delta}
            +(B^2+\lambda)\log\frac{2}{\delta}
        \right]d_E
    \right).
    \label{eq:app-general-regret-rate}
\end{equation}
For fixed $B,\sigma,\lambda$, this is
\begin{equation}
    \mathrm{Regret}(K)
    =\mathcal O\!\left(d_E\log\frac{N_{\mathcal R}K}{\delta}\right),
    \label{eq:app-logarithmic-regret-rate}
\end{equation}
as claimed. If $N_{\mathcal R}K=1$, the reward class is a singleton and $d_E=\mathrm{Regret}(K)=0$; otherwise constants inside the logarithm are absorbed into $\mathcal O$. The entire argument estimates immediate token log-ratio rewards and sums fixed-prefix bandit inequalities, rather than propagating errors through an MDP.
\end{proof}

\subsection{Covering-Number Interpretation}
\label{sec:app-covering}

For a token-level policy class $\Pi$, define its log-policy class and induced reward class by
\begin{align}
    \mathcal L_{\Pi}
    &:=\left\{
        (\xb_{<t},y_t)\mapsto\log\pi(y_t|\xb_{<t}):\pi\in\Pi
    \right\},
    \label{eq:app-log-policy-class}\\
    \mathcal R_{\Pi}
    &:=\left\{
        (\xb_{<t},y_t)\mapsto
        \log\frac{\pi(y_t|\xb_{<t})}{\pi^{\mathrm{ref}}(y_t|\xb_{<t})}
        :\pi\in\Pi
    \right\}.
    \label{eq:app-induced-reward-class}
\end{align}
The fixed reference term cancels in every pairwise difference:
\begin{equation}
    \left\|
        \log\frac{\pi_1}{\pi^{\mathrm{ref}}}
        -\log\frac{\pi_2}{\pi^{\mathrm{ref}}}
    \right\|_{\infty}
    =\|\log\pi_1-\log\pi_2\|_{\infty},
    \label{eq:app-covering-metric-equality}
\end{equation}
where the supremum is over prefix--token pairs. Thus these two classes have identical uniform covering numbers. The same cancellation holds in the numerator and denominator of Eq.~\ref{eq:app-uncertainty}, so their token-level uncertainty values and cumulative eluder complexities also coincide.

\paragraph{Finite-cover extension.}
For completeness, the discretization can be made explicit without changing the regret order. Suppose $\mathcal R\subseteq[-B,B]^{\mathcal X\times\mathcal Y}$ admits a finite uniform $\varepsilon$-net with centers in $\mathcal R$, and let
$N_{\mathcal R}=\mathcal N_{\infty}(\varepsilon,\mathcal R)$.
The estimator and confidence set may still be formed over the original class, assuming the empirical minimum is attained. Define
\begin{equation}
    L_\varepsilon:=\log\frac{4N_{\mathcal R}K}{\delta},
    \qquad
    a_\delta:=\sigma\sqrt{2\log\frac{8KT}{\delta}},
    \qquad
    \Gamma_\varepsilon:=4KT\varepsilon(B+a_\delta).
    \label{eq:app-covering-parameters}
\end{equation}
Applying the supermartingale argument to the net centers with failure probability $\delta/4$, and a conditional sub-Gaussian union bound to all observed noises with failure probability $\delta/4$, gives simultaneously
\begin{equation}
    2\sum_{i=1}^{k}\sum_{u=1}^{T}D_{i,u}(r)\epsilon_{i,u}
    \leq\frac12\sum_{i=1}^{k}\sum_{u=1}^{T}D_{i,u}(r)^2
    +4\sigma^2L_\varepsilon+2kT\varepsilon(B+a_\delta)
    \label{eq:app-covering-martingale}
\end{equation}
for all $r\in\mathcal R$ and $k\leq K$, with probability at least $1-\delta/2$. Indeed, replacing $r$ by a net center changes $D_{i,u}(r)$ by at most $\varepsilon$, its square by at most $4B\varepsilon$, and the noise cross-term by at most $2\varepsilon|\epsilon_{i,u}|$, while $|\epsilon_{i,u}|\leq a_\delta$ on the noise event. Least-squares optimality then yields
\begin{equation}
    \sum_{i=1}^{k}\sum_{u=1}^{T}D_{i,u}(\widehat r_k)^2
    \leq8\sigma^2L_\varepsilon+\Gamma_\varepsilon.
    \label{eq:app-covering-confidence}
\end{equation}
Consequently, using
\begin{equation}
    \beta^2
    :=\max\left\{
        4B^2,\,
        16\sigma^2L_\varepsilon+2\Gamma_\varepsilon+2\lambda
    \right\}
    \label{eq:app-covering-radius}
\end{equation}
in Algorithm~\ref{alg:rpd-theory} proves the same optimism and explicit regret bound as before. Choosing
$0<\varepsilon\leq[KT(1+B+a_\delta)]^{-1}$
ensures $\Gamma_\varepsilon\leq4$. Thus Theorem~\ref{thm:main} retains its logarithmic form with the indicated covering number, while any additional dependence on $K,T$ through the covering resolution and $d_E$ remains explicit. The covering number is therefore evaluated on prefix--token pairs at the stated resolution.

\subsection{Comparison with On-Policy Distillation under Noisy Expert Feedback}
\label{sec:app-noisy-expert-comparison}

Building on the discussion following Assumption~\ref{ass:noise}, we compare Theorem~\ref{thm:main} with the unknown-corruption guarantee of \citet{sriraman2026behavior}. We specialize their sequential formulation to token generation: states are prefixes $\xb_{<t}=(\qb,\yb_{<t})$, actions are tokens $y_t\in\mathcal Y$, the horizon is $T$, and $K$ counts rollout rounds. Write $P^\pi$ for the joint distribution of $\qb\sim\mathcal D$ and $\yb\sim\pi(\cdot|\qb)$. Their error criterion is the squared Hellinger distance, with normalization
\[
    D_{\mathrm H^2}(P^\pi,P^{\pi'})
    :=\mathbb E_{\qb\sim\mathcal D}\!\left[
        1-\sum_{\yb}\sqrt{\pi(\yb|\qb)\pi'(\yb|\qb)}
    \right].
\]

\begin{theorem}[NAILGUN with unknown corruption, {\citealp[Theorems~9 and~15]{sriraman2026behavior}}]
\label{thm:app-nailgun}
Let $\Pi$ be a finite class of deterministic token-level policies containing the clean expert $\pi^*$. Suppose the observed teacher satisfies
\begin{equation}
    \pi^E(y_t|\xb_{<t})
    =(1-\zeta)\pi^*(y_t|\xb_{<t})
      +\zeta\nu_t(y_t|\xb_{<t}),
    \label{eq:app-nailgun-corruption}
\end{equation}
where $0\leq\zeta\leq\alpha$, $0<\alpha,\rho<1$, and
$\nu_t(y_t|\xb_{<t})\leq\rho$ for every prefix--token pair. Assume
$\gamma:=1-\alpha(1+\rho)>0$.
Given the bounds $\alpha$ and $\rho$, but without knowing $\zeta$ or $\nu$, NAILGUN uses sampled teacher-token labels at learner-visited prefixes and returns a uniformly selected rollout policy $\widehat\pi$ after $K$ rounds such that
\begin{equation}
    \mathbb E\!\left[
        D_{\mathrm H^2}(P^{\pi^*},P^{\widehat\pi})
    \right]
    \leq
    \frac{4\log|\Pi|}
    {K\bigl(\sqrt{1-\alpha}-\sqrt{\alpha\rho}\bigr)^2}
    \leq
    \frac{8\log|\Pi|}{K\gamma^2}.
    \label{eq:app-nailgun-rate}
\end{equation}
The expectation includes the training randomness and the selection of $\widehat\pi$.
\end{theorem}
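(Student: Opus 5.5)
The plan is to reconstruct the NAILGUN guarantee as an exponential-weights (aggregating-forecaster) argument over the finite class $\Pi$. The loss is a \emph{robust pseudo-likelihood} that uses only the known bounds $\alpha,\rho$ and never $\zeta$ or $\nu$. Everything is then converted to sequence-level Hellinger distance using the determinism of the policies in $\Pi$.

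\textbf{Step 1: the pseudo-likelihood and its one-step affinity bound.} For $\pi\in\Pi$ (deterministic, writing $\pi(\xb_{<t})$ for its chosen token) I define
\[
f_\pi(\xb_{<t},y_t)=(1-\alpha)\mathds 1\{y_t=\pi(\xb_{<t})\}+\alpha\rho\,\mathds 1\{y_t\neq\pi(\xb_{<t})\}.
\]
The key per-token inequality I aim for is the following. Fix a prefix and a teacher label $y_t\sim\pi^E(\cdot|\xb_{<t})$ with $\pi^E$ as in Eq.~\ref{eq:app-nailgun-corruption}. Then
\[
\mathbb E_{y_t}\!\left[\sqrt{f_\pi/f_{\pi^*}}\right]\le 1-c\,\mathds 1\{\pi(\xb_{<t})\ne\pi^*(\xb_{<t})\},
\qquad c\propto\bigl(\sqrt{1-\alpha}-\sqrt{\alpha\rho}\bigr)^2 .
\]
The reasoning is as follows. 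When $\pi$ and $\pi^*$ disagree, the label equals $\pi^*(\xb_{<t})$ with probability at least $1-\zeta\ge1-\alpha$, where the ratio is $\sqrt{\alpha\rho/(1-\alpha)}<1$. The label equals $\pi(\xb_{<t})$ with probability at most $\zeta\rho\le\alpha\rho$, where the ratio is $\sqrt{(1-\alpha)/(\alpha\rho)}$. Any other label gives ratio $1$. The expectation is maximized at the worst admissible $(\zeta,\nu)$, and there it reduces to a Bhattacharyya-type expression. That expression equals one minus a constant times $(\sqrt{1-\alpha}-\sqrt{\alpha\rho})^2$.

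This calculation is the step I expect to be the main obstacle. The expectation is not monotone in $\zeta$ in an obvious way, so I would first bound the two nontrivial masses by their extremes, then optimize, and check that the constant comes out matching the stated $4$.

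\textbf{Step 2: exponential weights and telescoping.} I would run weights $w_{k+1}(\pi)\propto w_k(\pi)\prod_t\sqrt{f_\pi(\xb_{k,<t},y_{k,t})}$ starting from uniform $w_1$. Each round, $\pi_k\sim w_k$ is sampled and rolled out, and the teacher labels on the visited prefixes are collected. Set $\Phi_k=\sum_\pi w_k(\pi)$ normalized by the weight of $\pi^*$. The quantity $\log(w_k(\pi^*)/w_1(\pi^*))\ge-\log|\Pi|$ is then controlled by a supermartingale argument. The argument applies Step 1 conditionally at each prefix, chains the tokens of a rollout by the tower rule, and uses $\mathbb E[X]\le e^{\mathbb E[X]-1}$. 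The result is
\[
\sum_k \mathbb E\Bigl[\textstyle\sum_t \mathds 1\{\pi_k(\xb_{k,<t})\ne\pi^*(\xb_{k,<t})\}\Bigr]\lesssim \log|\Pi|/c .
\]
The expectation here is over the learner's own prefixes and the draw $\pi_k\sim w_k$.

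\textbf{Step 3: from disagreements to Hellinger.} Both $\pi^*$ and $\pi_k$ are deterministic, so $P^{\pi^*}$ and $P^{\pi_k}$ coincide until the first prefix where the two policies disagree. Hence $D_{\mathrm H^2}(P^{\pi^*},P^{\pi_k})$ is at most the probability that some disagreement occurs. That probability is the same whether it is computed along $\pi_k$'s rollout or $\pi^*$'s rollout. It is in turn bounded by the expected number of disagreements from Step 2.

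The output $\widehat\pi$ is selected uniformly from $\pi_1,\dots,\pi_K$. Averaging over this choice yields the first inequality in Eq.~\ref{eq:app-nailgun-rate}.

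\textbf{Step 4: the second inequality.} Put $a=\sqrt{1-\alpha}$ and $b=\sqrt{\alpha\rho}$. Then $(a-b)^2=(a^2-b^2)^2/(a+b)^2=\gamma^2/(a+b)^2$. Since $(a+b)^2\le2(a^2+b^2)=2(1-\alpha+\alpha\rho)\le2$, we get $(a-b)^2\ge\gamma^2/2$. This turns $4/(\cdot)$ into $8/\gamma^2$.
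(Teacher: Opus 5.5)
The paper gives no proof of this statement; it imports it from \citet{sriraman2026behavior}. Your sketch therefore has to stand on its own, and it bounds your reconstructed exponential-weights learner rather than NAILGUN as specified there. Steps 1, 3 and 4 are sound. Step 1 has no monotonicity difficulty. Bound the mass on $\pi^*(\xb_{<t})$ below by $1-\alpha$ and the mass on $\pi(\xb_{<t})$ above by $\alpha\rho$, separately. This gives exactly $\mathbb E[\sqrt{f_\pi/f_{\pi^*}}]\le 1-(\sqrt{1-\alpha}-\sqrt{\alpha\rho})^2$ at a disagreement, so $c=(\sqrt{1-\alpha}-\sqrt{\alpha\rho})^2$ with constant one.

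The gap is in Step 2. The round-$k$ potential ratio is $\Phi_{k+1}/\Phi_k=\sum_{\pi}w_k(\pi)\prod_t\sqrt{f_\pi/f_{\pi^*}}$, evaluated at the prefixes of $\pi_k$'s rollout. Step 1, the tower rule and Jensen then give $\mathbb E[\log\Phi_{k+1}-\log\Phi_k\mid\mathcal F_{k-1}]\le -c\,\mathbb E[\Pr_{\pi\sim w_k}(\pi\text{ disagrees with }\pi^*\text{ at some prefix visited by }\pi_k)\mid\mathcal F_{k-1}]$, where $\pi$ is an independent draw. This differs from your conclusion in two ways.

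First, $\prod_t(1-c\,\mathds 1\{\cdot\})\le 1-c\,\mathds 1\{\text{some disagreement}\}$ controls only an indicator, not the count. The count version is false: with $\Pi=\{\pi^*,\pi_b\}$ and $\pi_b$ wrong at every prefix, round $1$ alone contributes $T/2$. This one is harmless, since Step 3 only needs the indicator.

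Second, and this is the real missing idea, the controlled quantity is a cross term: an independent $\pi\sim w_k$ judged on $\pi_k$'s prefixes. Step 3 instead needs the self term $\Pr_{\pi\sim w_k}(\pi\text{'s rollout}\ne\pi^*\text{'s rollout})$, which, averaged over queries, equals $\mathbb E[D_{\mathrm H^2}(P^{\pi^*},P^{\pi_k})\mid\mathcal F_{k-1}]$. A priori, $\pi_k$ may err only at prefixes that other draws never visit, so the cross term need not control the self term.

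Determinism closes this. For a fixed query, let $s(\pi)\in\{1,\dots,T,\infty\}$ be the first depth at which $\pi$ deviates from $\pi^*$ along $\pi^*$'s trajectory. The rollout of $\pi'$ visits $\pi^*$'s prefixes up to depth $s(\pi')$. Hence $\pi$ is caught on $\pi'$'s rollout whenever $s(\pi)\le s(\pi')$ and $s(\pi)<\infty$. For i.i.d.\ $\pi,\pi'\sim w_k$ and $m=\Pr(s(\pi)<\infty)$, symmetry gives this event probability at least $m(1-m)+m^2/2\ge m/2$. So the self term is at most twice the cross term. Combining with $\log\Phi_1=\log|\Pi|$ and $\log\Phi_{K+1}\ge0$ gives $\mathbb E[D_{\mathrm H^2}(P^{\pi^*},P^{\widehat\pi})]\le 2\log|\Pi|/(Kc)$, inside the stated constant $4$. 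Relatedly, the inequality you need is the upper bound $\log(w_{K+1}(\pi^*)/w_1(\pi^*))\le\log|\Pi|$, which follows from $w_{K+1}(\pi^*)\le 1$; the lower bound $\ge-\log|\Pi|$ you wrote is the wrong direction.
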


\begin{remark}[Why vanilla OPD is insufficient under corruption]
\label{rem:app-noisy-opd}
NAILGUN is a noise-aware algorithm, not vanilla reverse-KL distillation: directly matching the corrupted teacher need not recover the clean expert~\citep{sriraman2026behavior}. Under Assumption~\ref{ass:noise}, our feedback is instead conditionally unbiased. Theorem~\ref{thm:main} establishes a finite-sample guarantee for optimistic least-squares learning, rather than proving that vanilla OPD must fail under our assumptions.
\end{remark}

\begin{remark}[On-policy interaction versus optimistic exploration]
\label{rem:app-exploration-comparison}
NAILGUN updates a weighted aggregate of deterministic policies using noisy expert labels~\citep[Algorithm~3]{sriraman2026behavior}. In contrast, Algorithm~\ref{alg:rpd-theory} combines optimistic token selection with least-squares estimation of log-ratio rewards. Our analysis shows how this explicit exploration mechanism yields fast learning with general reward-function approximation; ordinary on-policy sampling alone does not provide the same guarantee.
\end{remark}

\begin{remark}[Different noise dependence and identifiability]
\label{rem:app-noise-dependence}
NAILGUN's bound depends inversely on the squared corruption margin $\gamma^2$, reflecting the difficulty of distinguishing clean and corrupted labels. Our bound instead depends on $\sigma^2$ through estimation uncertainty in Eq.~\ref{eq:app-general-regret-rate}. As discussed following Assumption~\ref{ass:noise}, this difference relies on conditionally unbiased feedback, not robustness to arbitrary corruption. Our analysis explicitly characterizes how feedback uncertainty affects distillation under this noise model.
\end{remark}

\begin{remark}[Comparing rates, support, and rollout budgets]
\label{rem:app-rate-comparison}
NAILGUN bounds expected final-policy Hellinger error, whereas Theorem~\ref{thm:main} gives high-probability logarithmic cumulative reverse-KL regret and a $\widetilde{\mathcal O}(K^{-1})$ guarantee for the response-level mixture policy. Both have a $1/K$ output-error factor up to complexity and logarithmic terms, with $K$ counting rollouts. Our analysis directly controls the distillation objective throughout training and accommodates stochastic targets under bounded log-ratio rewards. Different support assumptions and horizon-dependent complexities make these complementary guarantees, rather than a strict improvement in sample complexity.
\end{remark}

\section{Generated Samples}

In this section, we present representative samples generated on the AMC23, AIME24, and AIME25 benchmarks using our proposed LSPD and LSPD-RB methods, under the Qwen3-8B non-thinking teacher to Qwen3-4B-Base student distillation setup.

% Requires amsmath, amssymb, tcolorbox, and the user-provided rollout environment.
% These are complete saved responses; only Markdown/LaTeX formatting is changed.
% Add to the preamble: \tcbuselibrary{skins,breakable}
% Optional bold typewriter font support: \usepackage{courier}

\begingroup
\tcbset{
  enhanced,
  breakable,
  fontupper=\ttfamily\raggedright,
  fontlower=\ttfamily\raggedright,
  fonttitle=\footnotesize\ttfamily\bfseries,
}

% LSPD, checkpoint 160; AMC23, dataset index 45; evaluation line 1211.
% Requires amsmath, amssymb, tcolorbox, and the user-provided rollout environment.
% These are complete saved responses; only Markdown/LaTeX formatting is changed.
% Add to the preamble: \tcbuselibrary{skins,breakable}
% Optional bold typewriter font support: \usepackage{courier}

\begingroup
\tcbset{
  enhanced,
  breakable,
  fontupper=\ttfamily\raggedright,
  fontlower=\ttfamily\raggedright,
  fonttitle=\normalfont\footnotesize\bfseries,
}

% LSPD, checkpoint 160; AMC23, dataset index 45; evaluation line 1211.
\begin{rollout}{LSPD Solutions in AMC23}
\textbf{Question.} What is the degree measure of the acute angle formed by lines with slopes $2$ and $\frac{1}{3}$?
\tcblower
\textbf{Solution.}

We are given two lines with slopes \(m_1 = 2\) and \(m_2 = \frac{1}{3}\), and we are asked to find the \textbf{acute angle} formed between them.

\textbf{Step 1: Use the formula for the angle between two lines}

The formula to find the angle \(\theta\) between two lines with slopes \(m_1\) and \(m_2\) is:

\[
\tan(\theta) = \left| \frac{m_1 - m_2}{1 + m_1 m_2} \right|
\]

This gives the \textbf{acute angle} between the lines.

\textbf{Step 2: Plug in the values}

We have:

\begin{itemize}
\item
  \(m_1 = 2\)
\item
  \(m_2 = \frac{1}{3}\)
\end{itemize}

Compute:

\[
\tan(\theta) = \left| \frac{2 - \frac{1}{3}}{1 + 2 \cdot \frac{1}{3}} \right|
= \left| \frac{\frac{6}{3} - \frac{1}{3}}{1 + \frac{2}{3}} \right|
= \left| \frac{\frac{5}{3}}{\frac{5}{3}} \right|
= 1
\]

So:

\[
\tan(\theta) = 1
\]

\textbf{Step 3: Find the angle}

\[
\theta = \tan^{-1}(1) = 45^\circ
\]

\textbf{Final Answer:}

\[
\boxed{45^\circ}
\]
\end{rollout}

% LSPD, checkpoint 160; AIME24, dataset index 1; evaluation line 1832.
\begin{rollout}{LSPD Solutions in AIME24}
\textbf{Question.} There exist real numbers $x$ and $y$, both greater than 1, such that $\log_x\left(y^x\right)=\log_y\left(x^{4y}\right)=10$. Find $xy$.
\tcblower
\textbf{Solution.}

We are given the equations:

\[
\log_x(y^x) = 10 \quad \text{and} \quad \log_y(x^{4y}) = 10
\]

We are to find \(xy\), where both \(x\) and \(y\) are \textbf{real numbers greater than 1}.

\textbf{Step 1: Use logarithmic identities}

Recall the logarithmic identity:

\[
\log_b(a^c) = c \cdot \log_b(a)
\]

So we can rewrite the given equations using this identity.

\textbf{First equation:} \[
\log_x(y^x) = x \cdot \log_x(y) = 10 \quad \text{(Equation 1)}
\]

\textbf{Second equation:} \[
\log_y(x^{4y}) = 4y \cdot \log_y(x) = 10 \quad \text{(Equation 2)}
\]

Let's denote:

\[
\log_x(y) = a \quad \text{and} \quad \log_y(x) = b
\]

Then from Equation 1:

\[
x \cdot a = 10 \Rightarrow a = \frac{10}{x}
\]

From Equation 2:

\[
4y \cdot b = 10 \Rightarrow b = \frac{10}{4y} = \frac{5}{2y}
\]

Now, recall the identity:

\[
\log_x(y) = \frac{1}{\log_y(x)} \Rightarrow a = \frac{1}{b}
\]

Substitute the expressions for \(a\) and \(b\):

\[
\frac{10}{x} = \frac{1}{\frac{5}{2y}} = \frac{2y}{5}
\]

\textbf{Step 2: Solve the equation}

\[
\frac{10}{x} = \frac{2y}{5}
\]

Multiply both sides by \(5x\):

\[
50 = 2xy \Rightarrow xy = \boxed{25}
\]

\textbf{Final Answer:}

\[
\boxed{25}
\]
\end{rollout}

% LSPD, checkpoint 160; AIME25, dataset index 0; evaluation line 5.
\begin{rollout}{LSPD Solutions in AIME25}
\textbf{Question.} Find the sum of all integer bases $b>9$ for which $17_{b}$ is a divisor of $97_{b}$.
\tcblower
\textbf{Solution.}

We are given the problem:\\
\textbf{Find the sum of all integer bases \(b > 9\) for which \(17_b\) is a divisor of \(97_b\).}

\textbf{Step 1: Convert the numbers from base \(b\) to base 10}

We are given:

\begin{itemize}
\item
  \(17_b\) is the number \(1 \cdot b + 7 = b + 7\)
\item
  \(97_b\) is the number \(9 \cdot b + 7 = 9b + 7\)
\end{itemize}

So the problem becomes:\\
Find all integers \(b > 9\) such that \(b + 7\) divides \(9b + 7\), and then sum those values of \(b\).

\textbf{Step 2: Set up the divisibility condition}

We want: \[
b + 7 |9b + 7
\]

This means: \[
9b + 7 \equiv 0 \pmod{b + 7}
\]

Let's perform the division: \[
9b + 7 = 9(b + 7) - 63 + 7 = 9(b + 7) - 56
\]

So: \[
9b + 7 \equiv -56 \pmod{b + 7}
\]

Therefore: \[
b + 7 |56
\]

\textbf{Step 3: Find all divisors of 56}

We need to find all integers \(d\) such that: \[
b + 7 = d \Rightarrow b = d - 7
\]

So, we list the \textbf{positive divisors} of 56: \[
\text{Divisors of } 56: \{1, 2, 4, 7, 8, 14, 28, 56\}
\]

Now, we want \(b > 9\), so \(d - 7 > 9 \Rightarrow d > 16\).

Check which of these divisors are greater than 16:

\begin{itemize}
\item
  \(1\): too small
\item
  \(2\): too small
\item
  \(4\): too small
\item
  \(7\): too small
\item
  \(8\): too small
\item
  \(14\): \(b = 14 - 7 = 7\) \(\to\) too small
\item
  \(28\): \(b = 28 - 7 = 21\) \(\to\) OK
\item
  \(56\): \(b = 56 - 7 = 49\) \(\to\) OK
\end{itemize}

So the valid values of \(b\) are: \[
b = 21, 49
\]

\textbf{Step 4: Compute the sum}

\[
21 + 49 = \boxed{70}
\]

\textbf{Final Answer:} \[
\boxed{70}
\]
\end{rollout}

% LSPD-RB, checkpoint 10; AMC23, dataset index 45; evaluation line 1206.
\begin{rollout}{LSPD-RB Solutions in AMC23}
\textbf{Question.} What is the degree measure of the acute angle formed by lines with slopes $2$ and $\frac{1}{3}$?
\tcblower
\textbf{Solution.}

We are given two lines with slopes \(m_1 = 2\) and \(m_2 = \frac{1}{3}\), and we are asked to find the \textbf{acute angle} formed between them.

\textbf{Step 1: Use the formula for the angle between two lines}

The formula for the angle \(\theta\) between two lines with slopes \(m_1\) and \(m_2\) is:

\[
\tan \theta = \left| \frac{m_2 - m_1}{1 + m_1 m_2} \right|
\]

We are interested in the \textbf{acute angle}, so we take the \textbf{absolute value} of the right-hand side.

\textbf{Step 2: Plug in the values}

\[
\tan \theta = \left| \frac{\frac{1}{3} - 2}{1 + 2 \cdot \frac{1}{3}} \right|
\]

Simplify the numerator and denominator:

\begin{itemize}
\item
  Numerator: \(\frac{1}{3} - 2 = \frac{1}{3} - \frac{6}{3} = -\frac{5}{3}\)
\item
  Denominator: \(1 + \frac{2}{3} = \frac{5}{3}\)
\end{itemize}

So,

\[
\tan \theta = \left| \frac{-\frac{5}{3}}{\frac{5}{3}} \right| = \left| -1 \right| = 1
\]

\textbf{Step 3: Find the angle}

\[
\tan \theta = 1 \Rightarrow \theta = \tan^{-1}(1) = 45^\circ
\]

Since the angle is acute (less than \(90^\circ\)), the answer is:

\[
\boxed{45^\circ}
\]

\textbf{Final Answer:}

\[
\boxed{45^\circ}
\]
\end{rollout}

% LSPD-RB, checkpoint 10; AIME24, dataset index 1; evaluation line 1836.
\begin{rollout}{LSPD-RB Solutions in AIME24}
\textbf{Question.} There exist real numbers $x$ and $y$, both greater than 1, such that $\log_x\left(y^x\right)=\log_y\left(x^{4y}\right)=10$. Find $xy$.
\tcblower
\textbf{Solution.}

We are given the following two equations involving logarithms:

\[
\log_x(y^x) = 10 \quad \text{and} \quad \log_y(x^{4y}) = 10
\]

We are to find the value of \(xy\), given that \(x, y > 1\).

\textbf{Step 1: Simplify the first equation}

\[
\log_x(y^x) = 10
\]

Using the logarithm power rule:

\[
\log_x(y^x) = x \cdot \log_x(y) = 10
\]

So,

\[
x \cdot \log_x(y) = 10 \tag{1}
\]

\textbf{Step 2: Simplify the second equation}

\[
\log_y(x^{4y}) = 10
\]

Again using the logarithm power rule:

\[
\log_y(x^{4y}) = 4y \cdot \log_y(x) = 10
\]

So,

\[
4y \cdot \log_y(x) = 10 \tag{2}
\]

\textbf{Step 3: Express \(\log_x(y)\) in terms of \(\log_y(x)\)}

We recall the change of base formula:

\[
\log_x(y) = \frac{1}{\log_y(x)}
\]

Let's denote:

\[
\log_y(x) = a \quad \Rightarrow \quad \log_x(y) = \frac{1}{a}
\]

Now substitute into equations (1) and (2):

\begin{itemize}
\item
  From (1): \(x \cdot \frac{1}{a} = 10 \Rightarrow x = 10a\)
\item
  From (2): \(4y \cdot a = 10 \Rightarrow y = \frac{10}{4a} = \frac{5}{2a}\)
\end{itemize}

\textbf{Step 4: Substitute into \(xy\)}

We now have expressions for \(x\) and \(y\) in terms of \(a\):

\[
x = 10a, \quad y = \frac{5}{2a}
\]

So,

\[
xy = (10a) \cdot \left( \frac{5}{2a} \right) = \frac{50a}{2a} = 25
\]

\textbf{Step 5: Verify that \(x, y > 1\)}

We have:

\begin{itemize}
\item
  \(x = 10a\)
\item
  \(y = \frac{5}{2a}\)
\end{itemize}

We want \(x, y > 1\), so:

\begin{itemize}
\item
  \(10a > 1 \Rightarrow a > \frac{1}{10}\)
\item
  \(\frac{5}{2a} > 1 \Rightarrow a < \frac{5}{2}\)
\end{itemize}

So, \(a \in \left( \frac{1}{10}, \frac{5}{2} \right)\), which is valid.

\textbf{Final Answer:}

\[
\boxed{25}
\]
\end{rollout}

% LSPD-RB, checkpoint 10; AIME25, dataset index 0; evaluation line 13.
\begin{rollout}{LSPD-RB Solutions in AIME25}
\textbf{Question.} Find the sum of all integer bases $b>9$ for which $17_{b}$ is a divisor of $97_{b}$.
\tcblower
\textbf{Solution.}

We are given that \(17_{b}\) divides \(97_{b}\), and we are to find the \textbf{sum of all integer bases \(b > 9\)} for which this is true.

\textbf{Step 1: Convert both numbers from base \(b\) to base 10}

We convert \(17_{b}\) and \(97_{b}\) to base 10:

\begin{itemize}
\item
  \(17_{b} = 1 \cdot b + 7 = b + 7\)
\item
  \(97_{b} = 9 \cdot b + 7 = 9b + 7\)
\end{itemize}

So, the condition becomes:

\[
b + 7 |9b + 7
\]

\textbf{Step 2: Use divisibility condition}

We are told that \(b + 7\) divides \(9b + 7\), so:

\[
b + 7 |9b + 7
\]

To simplify this, we use the fact that if \(d |a\) and \(d |b\), then \(d |(a - kb)\) for any integer \(k\).

Let's subtract \(9 \cdot (b + 7)\) from \(9b + 7\):

\[
(9b + 7) - 9(b + 7) = 9b + 7 - 9b - 63 = -56
\]

So:

\[
b + 7 |-56
\]

Since \(b + 7\) divides \(-56\), it also divides \(56\). So we are looking for all \textbf{positive integers \(b > 9\)} such that:

\[
b + 7 |56
\]

\textbf{Step 3: Find all divisors of 56}

List all positive divisors of 56:

\[
\text{Divisors of 56: } 1, 2, 4, 7, 8, 14, 28, 56
\]

Now, we want \(b + 7\) to be one of these, and \(b > 9\), so:

\[
b = d - 7 \quad \text{and} \quad b > 9 \Rightarrow d - 7 > 9 \Rightarrow d > 16
\]

So we filter the divisors of 56 that are greater than 16:

\[
\text{Divisors of 56 > 16: } 28, 56
\]

So possible values of \(b + 7\) are 28 and 56, and thus:

\begin{itemize}
\item
  \(b = 28 - 7 = 21\)
\item
  \(b = 56 - 7 = 49\)
\end{itemize}

Both values are greater than 9, so they are valid.

\textbf{Step 4: Final Answer}

We now sum all such \(b\):

\[
21 + 49 = \boxed{70}
\]

\textbf{Bonus: Check that these work}

\begin{itemize}
\item
  For \(b = 21\):

  \begin{itemize}
    \item
    \(17_{21} = 21 + 7 = 28\)
  \item
    \(97_{21} = 9 \cdot 21 + 7 = 189 + 7 = 196\)
  \item
    \(28 |196\) \(\to\) True (196 \(\div\) 28 = 7)
  \end{itemize}
\item
  For \(b = 49\):

  \begin{itemize}
    \item
    \(17_{49} = 49 + 7 = 56\)
  \item
    \(97_{49} = 9 \cdot 49 + 7 = 441 + 7 = 448\)
  \item
    \(56 |448\) \(\to\) True (448 \(\div\) 56 = 8)
  \end{itemize}
\end{itemize}

Both values satisfy the condition.

\textbf{Final Answer:}

\[
\boxed{70}
\]
\end{rollout}

\endgroup

\end{document}